\newtheorem{theorem}{Theorem}
\newtheorem{corollary}{Corollary}
\newtheorem{lemma}{Lemma}
\newtheorem{definition}{Definition}
\newcommand{\pref}[1]{\prettyref{#1}}
\DeclareMathOperator{\E}{\mathbb{E}}
\DeclareMathOperator{\cT}{\mathcal{T}}
\DeclareMathOperator{\argmin}{\arg\min}
\DeclareMathOperator{\Ber}{\mathcal{B}}
\DeclareMathOperator{\Reg}{\operatorname{Reg}}
\newcommand{\ind}[1]{\mathbb{I}\left(#1\right)}
\newcommand{\commentout}[1]{}
\newcommand{\bbP}{\mathbb{P}}
\newcommand{\ip}[1]{\left\langle #1 \right\rangle}
\newcommand{\hatell}{\widehat\ell}
\newcommand{\hatf}{\widehat{f}}
\newcommand{\hatmu}{\widehat{\mu}}
\newcommand{\tildeell}{\widetilde\ell}
\newcommand{\tildep}{\widetilde{p}}
\newcommand{\bias}{\textbf{bias}}
\newcommand{\ftrl}{\textbf{ftrl}}
\newcommand{\calT}{\mathcal{T}}
\newcommand{\calA}{\mathcal{A}}
\newcommand{\calC}{\mathcal{C}}
\newcommand{\calF}{\mathcal{F}}
\newcommand{\bbF}{\mathbb{F}}
\newcommand{\gev}{G}
\newcommand{\Ee}{\E_{e}}
\newcommand{\savehyperref}[2]{\texorpdfstring{\hyperref[#1]{#2}}{#2}}
\newif\ifsup
\title{Optimal cross-learning for contextual bandits with unknown context distributions}
\author{
  Jon Schneider \\
  Google Research \\
  \texttt{jschnei@google.com}\\
  \And
  Julian Zimmert\\
  Google Research\\
  \texttt{zimmert@google.com}\\
}
\begin{document}

\maketitle
\begin{abstract}
We consider the problem of designing contextual bandit algorithms in the ``cross-learning'' setting of Balseiro et al., where the learner observes the loss for the action they play in all possible contexts, not just the context of the current round. We specifically consider the setting where losses are chosen adversarially and contexts are sampled i.i.d. from an unknown distribution. In this setting, we resolve an open problem of Balseiro et al. by providing an efficient algorithm with a nearly tight (up to logarithmic factors) regret bound of $\widetilde{O}(\sqrt{TK})$, independent of the number of contexts. As a consequence, we obtain the first nearly tight regret bounds for the problems of learning to bid in first-price auctions (under unknown value distributions) and sleeping bandits with a stochastic action set.

At the core of our algorithm is a novel technique for coordinating the execution of a learning algorithm over multiple epochs in such a way to remove correlations between estimation of the unknown distribution and the actions played by the algorithm. This technique may be of independent interest for other learning problems involving estimation of an unknown context distribution. 
\end{abstract}

\section{Introduction}

In the contextual bandits problem, a learner repeatedly observes a context, chooses an action, and observes a reward for the chosen action only. The goal is to learn a policy that maximizes the expected reward over time, while taking into account the fact that the context can change from one round to the next.  Algorithms for the contextual bandits problem are extensively used across various domains, such as personalized recommendations in e-commerce, dynamic pricing, clinical trials, and adaptive routing in networks, among others. 

Traditionally, in contextual bandits problems, the learner only observes the reward for the current action and the current context. However, in some applications, the learner may be able to deduce the reward they would have received from taking this action in other contexts and attempt to make use of this additional information. For example, if a learner is repeatedly bidding into an auction (where their context is their private value for the item, and their action is the bid), they can deduce their net utility under different counterfactual values for the item. 

This form of ``cross-learning'' between contexts was first introduced by \cite{balseiro2019contextual}, who showed that with this extra information it is possible to construct algorithms for this problem with regret guarantees (compared to the best fixed mapping from contexts to actions) that are \textit{independent} of the total number of contexts $C$ -- in contrast, without this cross-learning information it is necessary to suffer at least $\Omega(\sqrt{CKT})$ regret against such a benchmark. In particular, when contexts are drawn i.i.d. from a known distribution $\nu$ and losses are chosen adversarially, \cite{balseiro2019contextual} present an efficient algorithm that achieves  $\widetilde{O}(\sqrt{KT})$ expected regret compared to the best fixed mapping from contexts to actions. However, this learning algorithm crucially requires knowledge of the distribution $\nu$ over contexts (knowledge which is unrealistic to have in many of the desired applications). \cite{balseiro2019contextual} present an alternate algorithm in the case where $\nu$ is unknown, albeit with a significantly worse regret bound of $O(K^{1/3}T^{2/3})$.

Our main contribution in this paper is to close this gap, providing an efficient algorithm (\pref{alg: cross learning ftrl}) which does not require any prior knowledge of $\nu$, and attains $\widetilde{O}(\sqrt{KT})$ regret (where the $\widetilde{O}$ hides logarithmic factors in $K$ and $T$, but not $C$). Since there is an $\Omega(\sqrt{KT})$ regret bound from ordinary (non-contextual) bandits, this bound is optimal up to logarithmic factors in $K$ and $T$. 

\paragraph{Techniques} At first glance, it may seem misleadingly simple to migrate algorithms from the known context distribution setting to the unknown context distribution setting. After all, we are provided one sample from this context distribution every round, and these samples are unbiased and unaffected by the actions we take. This suggests the idea of just replacing any use of the true context distribution in the original algorithm by the current empirically estimated context distribution. 

Unfortunately, this does not easily work as stated. We go into more detail about why this fails and the challenges of getting this to work in \pref{sec:challenges} after we have introduced some notation, but at a high level the algorithm of \cite{balseiro2019contextual} requires computing a certain expectation over $\nu$ when computing low-variance unbiased loss estimates. In particular, this expectation appears in the denominator of these estimates, meaning tiny errors in evaluating it can lead to large changes in algorithm behavior. Even worse, the quantity we need to take the expectation of depends on the previous contexts and therefore can be correlated with our empirical estimate of $\nu$, preventing us from applying standard concentration bounds.

We develop new techniques to handle both of these challenges. First, we present a new method of analysis that sidesteps the necessity of proving high probability bounds on each of the denominators individually, instead bounding their expected sum in aggregate. Secondly, we present a method of scheduling the learning algorithm into different epochs in a way which largely disentangles the correlation between learning $\nu$ and solving the bandit problem.

As a final note, we remark that dealing with unknown context distributions is a surprising challenge in many other contextual learning problems. For example, \cite{neu2020efficient} study a variant of linear contextual bandits where they can only prove their strongest regret bounds in the setting where they know the distribution over contexts. It would be interesting to see if the techniques we develop in this paper provide a general method for handling such issues -- we leave this as an interesting future direction.

\subsection{Applications}

As an immediate consequence of our bounds for \pref{alg: cross learning ftrl}, we obtain nearly tight regret bounds for a number of problems of interest. We focus on two such applications in this paper: bidding in first-price auctions, and sleeping bandits with stochastic action sets. 

{\bf Learning to bid in first-price auctions \citep{balseiro2019contextual, han2020optimal, han2020learning, zhang2021meow, zhang2022leveraging, bfg2023learning}.}
In a first-price auction, an item is put up for sale. Simultaneously, several bidders each submit a hidden bid for the item. The bidder with the highest bid wins the item and pays the value of their bid. Over the last few years, first-price auctions have become an increasingly popular format for a variety of large-scale advertising auctions \citep{digidayFirstprice}. 

Unlike second-price auctions (where the winning bidder pays the second-highest bid), first-price auctions are \textit{non-truthful}, meaning that it is not the incentive of the bidder to bid their true value for the item -- indeed, doing so guarantees the bidder will gain no utility from winning the auction. Instead, the optimal bidding strategy in a first-price auction is complex and depends on the bidders estimation of the other players' values and bids. As such, it is a natural candidate for learning over time (especially since advertising platforms run these auctions millions of times a day).

This problem was a motivating application for \cite{balseiro2019contextual}, who proved an $O(T^{3/4})$ regret bound for bidders with an unknown (but stochastic iid and bounded) value distribution participating in adversarial first-price auctions with no feedback aside from whether they won the item. Later, several works studied variants of this problem under more relaxed feedback models (for example, \cite{han2020optimal} introduced ``winning-bid'' feedback, where the bidder can always see the winning bid, and \cite{zhang2022leveraging} study this problem in the presence of machine-learned advice), but none improve over the $O(T^{3/4})$ bound in the binary feedback setting.

In \pref{sec:app_bidding}, we show that \pref{alg: cross learning ftrl} leads to an efficient $\widetilde{O}(T^{2/3})$ regret algorithm for the setting of  \cite{balseiro2019contextual}. This nearly (up to logarithmic factors) matches an $\Omega(T^{2/3})$ lower bound proved by \cite{balseiro2019contextual}.

{\bf Sleeping bandits
\citep{kanade2009sleeping,kleinberg2010regret,neu2014online,kanade2014learning,kale2016hardness,saha2020improved}.} Sleeping bandits are a variant of the classical multi-armed bandit problem, motivated by settings where some actions or experts might not be available in every round. For instance, some items in retail stores might be out of stock, or certain servers in load balancing might be under maintenance.
When both losses and arm availabilities are adversarial, the problem is known to be NP-hard \citep{kleinberg2010regret} and EXP4 obtains the optimal $\widetilde O(K\sqrt{T})$ regret.
However, when losses are adversarial but availabilities are stochastic, it is unknown what the minimax optimal $K$-dependency is and whether it can be obtained by a computationally efficient algorithm. The state of the art for efficient algorithm is either $O((KT)^\frac{2}{3})$ \citep{neu2014online} or $O(\sqrt{2^KT})$ \citep{saha2020improved}. The latter work provides an improved algorithm with $O(K^2\sqrt{T})$ regret when the arm availabilities are independent, however, in the general case, the computational complexity scales with $T$.

In \pref{sec:app_sleeping}, we show that \pref{alg: cross learning ftrl} leads to an efficient ($O(K)$ time per round) $\widetilde{O}(\sqrt{KT})$ regret algorithm for the sleeping bandits problem with arbitrary stochastic arm availabilities. Again, this nearly matches the $\Omega(\sqrt{KT})$ lower bound inherited from standard bandits.

{\bf Other applications.} Finally, we briefly point out that our algorithm extends to the other applications mentioned in \cite{balseiro2019contextual}, including multi-armed bandits with exogenous costs, dynamic pricing with variable costs, and learning to play in Bayesian games. In all cases, applying \pref{alg: cross learning ftrl} allows us to get nearly the same regret bounds in the unknown context distribution setting as \cite{balseiro2019contextual} can obtain in the known distribution setting. 

\begin{table}
\centering
    \begin{tabular}{|c|c|c|}
\hline
Algorithm & Regret & Computation \\
\hline
Exp4  \cite{auer2002nonstochastic} & $K\sqrt{T}$  & $2^K$ \\
\hline
 \cite{saha2020improved} & $\sqrt{2^d T}$ ($K^2\sqrt{T}$)  & $KT$   \\
\hline
\pref{alg: cross learning ftrl} & $\sqrt{KT}$ & $K$\\
\hline
\end{tabular}
\vspace*{5pt}
\caption{Related works in the sleeping bandits framework ignoring all logarithmic terms. The improved regret bound of \cite{saha2020improved} is restricted to problem instances where availability of all arms are independent.
}
\end{table}

\section{Preliminaries}

\paragraph{Notation}
For any natural number $N$, we use $[N]=\{1,2,\dots,N\}$.

We study a contextual $K$-armed bandit problem over $T$ rounds, with contexts belonging to some set $\calC$. At the start of the problem, an oblivious adversary selects a bounded loss function $\ell_{tk}:\calC\rightarrow[0,1]$ for every round $t \in [T]$ and every arm $k\in[K]$. In each round $t$, then we begin by sampling a context $c_t\sim\nu$ i.i.d.\ from an unknown distribution $\nu$ over $\calC$, and we reveal this context to the learner. Based on this context, the learner selects an arm $A_t\in [K]$ to play. The adversary then reveals the function $\ell_{t,A_t}$, and the learner suffers loss $\ell_{t,A_t}(c_t)$. Notably, the learner observes the loss for \textit{every} context $c \in \calC$, but only for the arm $A_t$ they actually played.

We would like to design learning algorithms that minimize the expected regret of the learner with respect to the best fixed mapping from contexts to actions. In particular, letting $\Pi = \{\pi : [C] \rightarrow [K] \}$, we define
\[
\Reg = \max_{\pi^*\in\Pi}\E\left[\sum_{t=1}^T\ell_{t,A_t}(c_t)-\ell_{t,\pi^*(c_t)}(c_t)\right]\,.
\]

Note that here, the expectation is defined both over the randomness of the algorithm and the randomness of the contexts. 

\paragraph{Non-uniform action sets}

For some applications (specifically, for sleeping bandits), we will find it useful to restrict the action set of the learner as a function of the context. To do so, we associate every context $c\in\calC$ with a fixed non-empty set of active arms $\calA_c\subseteq[K]$. In round $t$, we then restrict the learner to playing an action $A_t$ in $\calA_{c_t}$ and measure the regret with respect to policies in the restricted class $\Pi = \{\pi: [K] \rightarrow [C] \;;\; \pi(c) \in \calA_{c}\}$. All the analysis we present later in the paper will apply to the non-uniform action set case (which includes the full action set case above as a special case). 

\subsection{Challenges to extending existing algorithms}\label{sec:challenges}

It is not a priori obvious that any learning algorithm in this setting can obtain regret independent of the size of $\calC$. Indeed, without the ability to cross-learn between contexts (i.e., only observing the loss for the current action and the current context), one can easily prove a lower bound of $\Omega(\sqrt{|\calC|KT \log K})$ by choosing an independent hard bandits instance for each of the contexts in $\calC$. With the ability to cross-learn between contexts, we side-step this lower bound by being able to gain a little bit of information about each context in each round -- however, this information may not be equally useful for every context (e.g., it may be the case that arm $1$ is a useful arm to explore for context $c_1$, but is already known to be very sub-optimal in context $c_2$). 

In \cite{balseiro2019contextual}, the authors provide an $\widetilde{O}(\sqrt{KT})$ regret algorithm for this problem in the setting where the learner is aware of the context distribution $\nu$. This algorithm essentially runs one copy of EXP3 per context using the following unbiased estimator of the loss (which takes advantage of the cross-learning between contexts):

\begin{equation}\label{eq:estimator}
\hatell_{tk}(c) = \frac{\ell_{tk}(c)}{\E_{c \sim \nu}[p_{tk}(c)]}\mathbb{I}(A_t=k),
\end{equation}

\noindent
where in this expression, $p_{tk}(c)$ is the probability that the algorithm would choose arm $k$ in round $t$ if the context was $c$. A straightforward analysis of this algorithm (following the analysis of EXP3, but using the reduced variance of this estimator) shows that it obtains $\widetilde{O}(\sqrt{KT})$ regret.

The only place knowledge of $\nu$ is required in this algorithm is in computing the denominator $f_{tk}(p) = \E_{c \sim \nu}[p_{tk}(c)]$ of our estimator. It is natural, then, to attempt to extend this algorithm to work in the unknown distribution setting by replacing the distribution $\nu$ with the empirical observed distribution of contexts so far; that is, replacing $f_{tk}(p)$ with $\hatf_{tk}(p) = \frac{1}{t}\sum_{s=1}^{t}p_{sk}(c_s)$. Unfortunately, this approach runs into the following two challenges.

 First, since this quantity appears in the denominator, small differences between $f_{tk}(p)$ and $\hatf_{tk}(p)$ can lead to big differences in the estimated losses. This can be partially handled by replacing the denominator with a high probability upper bound $\hatf_{tk}(p) + C_{t}$ for some confidence constant $C_t$. However, doing so also introduces a bias penalty of $O(C_{T}T)$ into the analysis. Tuning this constant leads to another $T^{2/3}$ algorithm.
 
 Secondly, when we compute the estimator $\hatf_{tk}(p) = \frac{1}{t}\sum_{s=1}^{t}p_{sk}(c_s)$, we have the issue that $p_{tk}$ is not independent from the previous contexts $c_t$. This can cause the gap between $\hatf_{tk}(p)$ and $f_{tk}(p)$ to be larger than what we would expect via concentration inequalities. Avoiding this issue via union bounds leads to a polynomial dependency on $|\calC|$.

\subsection{Our techniques}

\paragraph{Avoiding high-probability bounds.}
While prior work ensured that $\hatf_{t,k}(p)+C_t\geq f_{t,k}(p)$ with high probability, the analysis only requires this to hold in expectation. 
The following lemma shows that this relaxation allows for smaller confidence intervals.
While we don't use this specific lemma in our later proofs, we believe that this result might be of independent interest.

\begin{lemma}\label{lem:expected_bias}
\label{lem: inv upper bound}
Let $X_1,\dots, X_t$ be i.i.d.\ samples from a distribution $\nu$ over $[0,1]$ with mean $\mu$, and let $\hatmu=\frac{1}{t}\sum_{s=1}^tX_s$ denote the empirical mean. Then 
\begin{align*}
    &\E\left[\frac{1}{\hatmu + 16/t}\right] \leq \E\left[\frac{1}{\mu}\right]\,.
\end{align*}
\end{lemma}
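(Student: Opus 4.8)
The plan is to linearize the reciprocal through the elementary integral representation $\tfrac1z=\int_0^\infty e^{-sz}\,ds$, valid for any $z>0$, which converts the claim into a bound on the moment generating function of $\hatmu$ at negative arguments. Writing $a=16/t$ and noting that $\hatmu+a\ge a>0$ almost surely, Tonelli's theorem (all integrands are nonnegative) gives
\[
\E\!\left[\frac{1}{\hatmu+a}\right]=\int_0^\infty e^{-sa}\,\E\!\left[e^{-s\hatmu}\right]ds .
\]
Since $\mu$ is deterministic, $\E[1/\mu]=1/\mu$, so it suffices to show the right-hand side is at most $1/\mu$.

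To control $\E[e^{-s\hatmu}]$ I would exploit convexity in the right variable: on $[0,1]$ the map $x\mapsto e^{-sx/t}$ lies below its chord, so $e^{-sX_i/t}\le 1-X_i(1-e^{-s/t})$, hence $\E[e^{-sX_i/t}]\le 1-\mu(1-e^{-s/t})\le e^{-\mu(1-e^{-s/t})}$. Multiplying over the $t$ i.i.d.\ samples yields $\E[e^{-s\hatmu}]\le e^{-t\mu(1-e^{-s/t})}$. Substituting this and changing variables first by $u=s/t$ (so that $e^{-sa}=e^{-16u}$ because $ta=16$) and then by $v=1-e^{-u}\in[0,1]$ (so that $du=dv/(1-v)$ and $e^{-16u}=(1-v)^{16}$) collapses the double exponential into a Beta-type integral:
\[
\E\!\left[\frac{1}{\hatmu+a}\right]\le t\int_0^\infty e^{-16u}\,e^{-t\mu(1-e^{-u})}\,du = t\int_0^1 (1-v)^{15}e^{-t\mu v}\,dv .
\]

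The argument then finishes by the crude bound $(1-v)^{15}\le 1$ on $[0,1]$: the last integral is at most $t\int_0^1 e^{-t\mu v}\,dv=(1-e^{-t\mu})/\mu\le 1/\mu$, which is $\E[1/\mu]$. (The same computation shows the lemma holds with $16/t$ replaced by any $c/t$ with $c\ge 1$, and that the inequality is in fact strict, so the constant $16$ is not tight.) The step I expect to demand the most care is the moment-generating-function bound: one must apply the chord inequality to $X_i\in[0,1]$ with $s/t$ in the role of the inverse temperature, and then use independence correctly to arrive at the exponent $t\mu(1-e^{-s/t})$ rather than a weaker one; after that the two substitutions and the final domination are routine. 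A secondary point worth checking is the legitimacy of interchanging $\E$ with the $s$-integral, which is immediate from nonnegativity since $\hatmu+a$ is bounded away from $0$.
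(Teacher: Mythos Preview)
Your proof is correct and takes a genuinely different route from the paper's argument. The paper expands $\frac{1}{\hatmu+d}$ to second order around $\mu$ as
\[
\frac{1}{\hatmu+d}=\frac{1}{\mu+d}+\frac{\mu-\hatmu}{(\mu+d)^2}+\frac{(\mu-\hatmu)^2}{(\hatmu+d)(\mu+d)^2},
\]
observes the linear term vanishes in expectation, and then controls the quadratic remainder by slicing the event $\{\hatmu<\mu/2\}$ into shells via Bernstein's inequality together with a separate numerical lemma (their Lemma~\ref{lem: technical}) to handle the resulting sum. Your approach avoids all of this: the Laplace representation $1/z=\int_0^\infty e^{-sz}\,ds$ plus the chord bound for the convex map $x\mapsto e^{-sx/t}$ yields $\E[e^{-s\hatmu}]\le e^{-t\mu(1-e^{-s/t})}$, and the two substitutions reduce the problem to the trivial inequality $(1-v)^{15}\le 1$. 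Your method is shorter, requires no tail bounds or auxiliary numerical verification, and as you note it immediately shows that any $c\ge 1$ in place of $16$ suffices, a sharpening the paper's proof does not deliver. The paper's decomposition, by contrast, makes the dependence on the variance of $\hatmu$ explicit, which is perhaps more informative if one wanted to track how the bias shrinks with the variance, but for the stated lemma your argument is cleaner.
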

This implies that we only need order $\sqrt{T}$ many i.i.d. samples for estimating $f_{t,k}(p)$ with sufficient precision.

\paragraph{Increasing the number of i.i.d.\ samples.} 
In order for us to use samples from the context distribution in theorems like Lemma \ref{lem:expected_bias}, these samples must be independent and must not have already been used by the algorithm to compute the current policy.
We increase the number of independent samples via the idea of decoupling the estimation distribution from the playing distribution.

To elaborate, consider the setting of a slightly different environment that helps us in the following way: instead of observing the loss of the action we played from our distribution $p_t$, the environment instead reveals to us the loss of a fresh ``exploration action'' sampled from a snapshot $s$ of our action distribution from a previous round. If the environment takes a new snapshot of our policy every $L$ rounds (i.e., $s = p_{eL}$ for some integer $e$) then this reduces the number of times we need to estimate the importance weighting factor in the denominator of \eqref{eq:estimator} to once in every epoch, since the importance weighting factor stays the same throughout the epoch. At the same time, we have $L$ fresh i.i.d.\ samples of the context distribution available at the start of every new epoch.

We present a technique to operate in the same way without such a change in the environment. Instead of the environment taking snapshots of our policy, the algorithm will be responsible for taking snapshots $s$ itself. In order to generate unbiased samples from $s$ (while actually playing $p_t$ in round $t$), we decompose the desired action distribution $p_t$ into an equal mixture of the snapshot $s$ and an exploitation policy $q_t$: $p_t=\frac{1}{2}(s+q_t)$. We implement this mixture by tossing a fair coin of whether to sample from $s$ or $q_t$, and only create a loss estimation for when we sample from $s$. This approach fails when $q_t$ is not a valid distribution over arms, but we show that this is a low probability failure event by ensuring stability in the policy $p_t$.

Equipped with these two high level ideas, we now drill down into the technical details of our algorithm.

\section{Main result and analysis}

\subsection{The algorithm}
We will now present an efficient algorithm for the unknown distribution setting which achieves $\widetilde{O}(\sqrt{KT})$ regret. We begin by describing this algorithm, which is written out in \pref{alg: cross learning ftrl}.

At the core of our algorithm is an instance of the Follow the Regularized Leader (FTRL) algorithm with entropy regularization (i.e., EXP3). In each each round $t$, we will generate a distribution over actions $p_{t, c_t}$ for the current context $c_t$ via

\begin{align*}
    p_{t,c_t}= \argmin_{x\in\Delta([K])} \ip{x,\sum_{s=1}^{t-1}\hatell_{s}(c_t)}-\eta^{-1}F(x)\,,
\end{align*}
where $F(x)=\sum_{i=1}^Kx_i\log(x_i)$ is the unnormalized neg-entropy, $\eta$ is a learning rate and the $\hatell$ are loss estimates to be defined later.

We will \textbf{not} sample the action $A_t$ we play in round $t$ directly from $p_t$. Instead, we will sample our action from a modified distribution $q_t$ that we will construct in a such way so that the probability $q_{t, i}$ of playing a specific action is not correlated with every single previous context $c_{s}$ (for $s < t$). This will then allow us to construct loss estimates in a way that bypasses the second obstacle in \pref{sec:challenges}.

\begin{figure}[hbt]
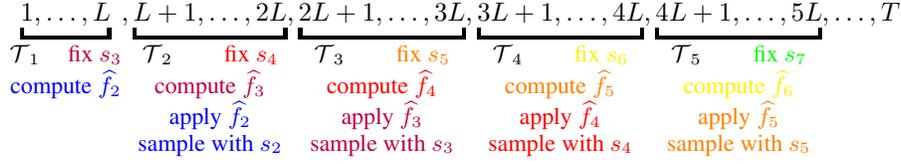

\begin{align*}
\underbracket{1,\dots,L}_{\small{\begin{matrix}\cT_1\quad \text{\small \color{purple}fix $s_3$}\\\text{\color{blue}compute $\hatf_{2}$}\\\end{matrix}}},\underbracket{L+1,\dots, 2L}_{\small{\begin{matrix}\cT_2\quad\quad \text{\small \color{red}fix $s_4$}\\\text{\color{purple}compute $\hatf_3$}\\\text{\color{blue}apply $\hatf_2$}\\\text{\color{blue} sample with $s_2$}\end{matrix}}},\underbracket{2L+1,\dots, 3L}_{\small{\begin{matrix}\cT_3\quad\quad \text{\small \color{orange}fix $s_5$}\\\text{\color{red}compute $\hatf_4$}\\\text{\color{purple}apply $\hatf_3$}\\\text{\color{purple}sample with $s_3$}\end{matrix}}},
\underbracket{3L+1,\dots, 4L}_{\small{\begin{matrix}\cT_4\quad\quad \text{\small \color{yellow}fix $s_6$}\\\text{\color{orange}compute $\hatf_5$}\\\text{\color{red}apply $\hatf_4$}\\\text{\color{red}sample with $s_4$}\end{matrix}}},
\underbracket{4L+1,\dots, 5L}_{\small{\begin{matrix}\cT_5\quad\quad \text{\small \color{green}fix $s_7$}\\\text{\color{yellow}compute $\hatf_6$}\\\text{\color{orange}apply $\hatf_5$}\\\text{\color{orange}sample with $s_5$}\end{matrix}}},\dots, T
\end{align*}
\caption{Illustration of the timeline of \pref{alg: cross learning ftrl}. At the end of epoch $\cT_e$, the snapshot $s_{e+2}$ is fixed. The contexts within epoch $\cT_{e}$ are used to compute loss estimators for epoch $\cT_{e+1}$, which are fed to the FTRL sub-algorithm.}
\end{figure}

To do so, we will divide the time horizon into epochs of equal length $L$ (where $L=\widetilde\Theta(\sqrt{KT})$, to be specified exactly later). Without loss of generality, we assume $L$ is even and that $T$ is an integer multiple of $L$. We let $\calT_{e}$ to denote the set of rounds in the $e$-th epoch.

At the end of each epoch, we take a single snapshot of the underlying FTRL distribution $p_t$ for each context and arm; that is, we let
\[
s_{e+2,c,k}=p_{eL,c,k}\,,\text{ where }s_{1,c,k}=s_{2,c,k}=\begin{cases}\frac{1}{|\calA_c|}&\mbox{ if }k\in\calA_c\\0& \mbox{ otherwise.}\end{cases}
\]
During epoch $e$, the learner has two somewhat competing goals. First, they would like to play actions drawn from a distribution close to $p_{t, c_t}$ (as this allows us to bound the learner's regret from the guarantees of FTRL). But secondly, the learner would also like to compute estimators of the true losses $\ell_{t, k}$ with small variance and sufficiently small bias.
To do this, the learner requires a good estimation of the probability of observing each loss (which in turn depends on both the context distribution and the distribution of actions they are playing). 

We avoid the problems inherent in estimating a changing distribution by committing to observe the loss function of arm $k$ with probability $f_{ek}=\E_{c\sim\nu}[s_{eck}/2]$ for any $t\in\cT_e$. This is guaranteed by the following rejection sampling procedure: we first play an arm according to the distribution
\begin{align*}
    q_{t,c_t} = \begin{cases}
    p_{t,c_t}&\text{ if }\forall\,k\in[K]:\,p_{t,c_t,k}\geq s_{e,c_t,k}/2\\
    s_{e,c_t}&\text{ otherwise.}
    \end{cases}
\end{align*}
After playing arm $k$ according to $q_{t,c_t}$, the learner samples a Bernoulli random variable $S_t$ probability $\frac{s_{ec_tk}}{2q_{tc_tk}}$. If $S_{t} = 0$, they ignore the feedback from this round; otherwise, they use this loss. This subsampling ensures that the probability of observing the loss for a given arm is constant over each epoch. To address the first goal and avoid paying large regret due to the mismatch of $p_{t}$ and $q_t$, we tune the FTRL algorithm to satisfy $p_t=q_t$ with high probability at all times.

To actually construct these loss estimates, we need accurate estimates of the $f_{ek}$. To do this we use contexts from epoch $e-1$ that were not used to compute $s_{ec}$, and are thus free of potential correlations. For similar technical reasons, we will also want to use different sets of rounds for computing estimators $\hatf_{ek}$ of $f_{ek}$ and estimators $\hatell_{tk}$ of the losses $\ell_{tk}$. To achieve this, we group all timesteps into consecutive pairs. In each pair of consecutive timesteps, we play the same distribution and randomly use one to calculate a loss estimate, and the other to estimate the sampling frequency.

To be precise, let $\cT_{e}^f$ denote the time-steps selected for estimation the sampling frequency and $\cT_e^\ell$ the time-steps to estimate the losses. Then we have
\begin{align*}
    \hatf_{ek} = \frac{1}{\left|\cT_{e-1}^f\right|}\sum_{t\in\cT_{e-1}^f}\frac{s_{ec_tk}}{2}\,,
\end{align*}
which is an unbiased estimator of $f_{ek}$.
The loss estimators are
\begin{align*}
    \hatell_{tk} = \frac{2\ell_{tk}}{\hatf_{ek}+\frac{3}{2}\gamma}\mathbb{I}\left(A_t=k\land S_t\land t\in\cT_e^\ell\right)\,,
\end{align*}
where $\gamma$ is a confidence parameter (which again, we will specify later).
\begin{algorithm}
\caption{Contextual cross-learning algorithm for the unknown distribution setting.}
\label{alg: cross learning ftrl}
\textbf{Input:} Parameters $\eta, \gamma > 0$ and $L < T$. \\
$\hatf_2\leftarrow 0$\\
\For{$t=1,\dots, L$}{
Observe $c_t$\\
Play $A_t\sim s_{1,c_t}$\\
$\hatf_2 \leftarrow \hatf_2+ \frac{s_{2,c_t}}{2L}$
}
\For{$e=2,\dots,T/L$}{
$\hatf_{e+1}\leftarrow 0$\\
\For{$t=(e-1)L+1,t=(e-1)L+3,\dots,e L-1$}{
Set $p_{t,c}= \argmin_{x\in\Delta([K])} \left(\ip{x,\sum_{s=1}^{t-1}\hatell_{s}(c)}-\eta^{-1}F(x)\right)$\\
\For{$t'=t,t+1$}{
    Observe $c_{t'}$\\
    \If{
        $p_{t,c_{t'},k} \geq s_{e,c_{t'},k}/2$ for all $k \in [K]$
    }{
        Set $q_{t',c_{t'}} = p_{t,c_{t'}}$
    }
    \Else{
        Set $q_{t',c_{t'}} = s_{e,c_{t'}}$
    }
    Play $A_{t'}\sim q_{t',c_{t'}}$ \\
    Observe $\ell_{t',A_{t'}}$
    }
    $t_{f},t_{\ell} \leftarrow \mathsf{RandPerm}(t,t+1)$\\
    $\hatf_{e+1}\leftarrow\hatf_{e+1} + \frac{s_{e+1,c_{t_f}}}{2 (L/2)}$\\
    Sample $S_t\sim \Ber\left(\frac{s_{e,c_{t_\ell},A_{t_\ell}}}{2q_{t,c_{t_\ell},A_{t_\ell}}}\right)$\\
    Set $\hatell_{t_\ell,k,c} = \frac{2\ell_{t_\ell,k,c}}{\hatf_{e,k}+\frac{3}{2}\gamma} \mathbb{I}\left(A_t=k, S_t=1\right)$
    }
    $s_{e+2} \leftarrow p_{t}$
}
\end{algorithm}

This concludes our description of the algorithm. In the remainder of this section, we will show that for appropriate settings of the parameters $\eta$, $\gamma$, and $L$, \pref{alg: cross learning ftrl} achieves $\widetilde{O}(\sqrt{KT})$ regret (the parameter $\iota$ in the following theorem is a parameter solely of the analysis and determines the failure probabilities of various concentration inequalities). 

\begin{theorem}\label{thm:main}
For any $\eta \leq \frac{\gamma}{2(2L\gamma+\iota)}$, $\gamma \geq \frac{16\iota}{L}$, $\iota \geq \log(8K/\gamma)$, \pref{alg: cross learning ftrl} satisfies
\begin{align*}
    \Reg = O\left(\left(\gamma+\frac{\iota}{L}+\frac{\gamma^2 L}{\iota}+\eta+\exp(-\iota)\frac{T}{L}\right) KT + \frac{\log(K)}{\eta}+L\right)\,.
\end{align*}
\end{theorem}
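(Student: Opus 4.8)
The plan is to pass from the true losses to the \emph{estimated} losses $\hatell$, run the standard per-context EXP3 analysis against $\hatell$, and then pay for the gap between $\hatell$ and $\ell$; everything takes place on a good event $\mathcal G$ on which the internal invariant $q_t=p_t$ holds together with the concentration statements we need. \textbf{(i) The good event.} First I would bound $\Pr[\mathcal G^c]$. The crux is FTRL stability: $\hatell_{tk}(c)\le\tfrac{2}{\hatf_{ek}+\frac32\gamma}\,\mathbb I(A_t=k,S_t=1)$, a bound \emph{independent of $c$}, so the cumulative loss estimate seen by the EXP3 copy of any context over the $\le 2L$ rounds separating the snapshot from round $t$ is at most $\sum_k$ of these; since $\tfrac{2}{\hatf_{ek}+\frac32\gamma}$ scales inversely with the observation frequency $f_{ek}$, this sum is $O(L)$ with probability $1-e^{-\iota}$, and the step-size bound $\eta\le\frac{\gamma}{2(2L\gamma+\iota)}$ makes $\eta\cdot O(L)\le\log 2$, giving $p_{t,c,k}\ge\tfrac12 p_{(e-2)L,c,k}=\tfrac12 s_{e,c,k}$ (hence $q_t=p_t$) and, by normalisation, $p_{t,c,k}\le 2 s_{e,c,k}$. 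Adding a Bernstein bound for each $\hatf_{ek}$ around $f_{ek}$ and union-bounding over the $\le T/L$ epochs and $K$ arms (this is where $\iota\ge\log(8K/\gamma)$ is used) yields $\Pr[\mathcal G^c]\le e^{-\iota}\tfrac{KT}{L}$; on $\mathcal G^c$ the regret is at most $T$, contributing the $e^{-\iota}\tfrac{T}{L}KT$ term.

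\textbf{(ii) Regret identity and the FTRL term.} On $\mathcal G$, since $A_t\sim q_{t,c_t}=p_{t,c_t}$ and $c_t\sim\nu$ is fresh, $\E[\ell_{t,A_t}(c_t)\mid\text{past}]=\E_{c\sim\nu}\langle p_{t,c},\ell_t(c)\rangle$ and $\E[\ell_{t,\pi^*(c_t)}(c_t)]=\E_{c\sim\nu}\ell_{t,\pi^*(c)}(c)$, so
\begin{align*}
\Reg\le{}& \E\Big[\sum_t\E_{c\sim\nu}\langle p_{t,c},\ell_t(c)-\hatell_t(c)\rangle\Big]+\E\Big[\E_{c\sim\nu}\sum_t\langle p_{t,c}-e_{\pi^*(c)},\hatell_t(c)\rangle\Big]\\
&+\E\Big[\sum_t\E_{c\sim\nu}\big(\hatell_{t,\pi^*(c)}(c)-\ell_{t,\pi^*(c)}(c)\big)\Big]+e^{-\iota}\tfrac{KT^2}{L}.
\end{align*}
The middle (FTRL) term is handled by the one-sided EXP3 bound for the independent copy of each $c$: $\sum_t\langle p_{t,c}-e_{\pi^*(c)},\hatell_t(c)\rangle\le\frac{\log K}{\eta}+\eta\sum_t\langle p_{t,c},\hatell_t(c)^2\rangle$, and because we \emph{average} over $c\sim\nu$ rather than summing over all of $\calC$, the leading term is $\frac{\log K}{\eta}$, not $\frac{|\calC|\log K}{\eta}$. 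For the variance part I would use $\E[\mathbb I(A_t=k,S_t=1)\mid\text{past}]=f_{ek}$, the stability bound $\E_{c\sim\nu}p_{t,c,k}\le 2\E_{c\sim\nu}s_{e,c,k}=4f_{ek}$, that $\tfrac{f_{ek}}{\hatf_{ek}+\frac32\gamma}=O(1)$ on $\mathcal G$, and $\sum_k f_{ek}=\tfrac12$, obtaining $\eta\,\E[\E_{c\sim\nu}\sum_t\langle p_{t,c},\hatell_t(c)^2\rangle]=O(\eta KT)$.

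\textbf{(iii) The two bias terms --- the crux.} Both are controlled by the decoupling that the epoch structure provides: conditionally on $\mathcal F_{e-2}$ (all randomness up to the end of epoch $e-2$, which already fixes $s_e=p_{(e-2)L}$), the contexts feeding $\hatf_{ek}$ are $L/2$ fresh i.i.d.\ samples with mean $f_{ek}$, while the FTRL iterates $p_{t,c}$ active during epoch $e$ --- built only from loss estimators of epochs $\le e-1$, hence from the ``loss'' sub-rounds, disjoint from the ``frequency'' sub-rounds --- are conditionally independent of $\hatf_{ek}$. For the \emph{comparator} bias this makes each per-pair contribution equal to $\sum_k w_k\big(\E[\tfrac{f_{ek}}{\hatf_{ek}+\frac32\gamma}\mid\mathcal F_{e-2}]-1\big)$ with deterministic $w_k\ge 0$; since $\tfrac32\gamma\ge 16/(L/2)$ (from $\gamma\ge 16\iota/L$ and $\iota>2$), \pref{lem:expected_bias} gives $\E[\tfrac{f_{ek}}{\hatf_{ek}+\frac32\gamma}\mid\mathcal F_{e-2}]\le 1$, so this bias is $\le 0$. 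For the \emph{learner} bias the same independence factorises each per-pair contribution as $\sum_k \E\big[\E_{c\sim\nu}[p_{t,c,k}(\ell_{tk}(c)+\ell_{t+1,k}(c))]\mid\mathcal F_{e-2}\big]\cdot\E[1-\tfrac{f_{ek}}{\hatf_{ek}+\frac32\gamma}\mid\mathcal F_{e-2}]$; Jensen bounds the second factor by $\tfrac{3\gamma/2}{f_{ek}+3\gamma/2}\le\min(1,\tfrac{3\gamma}{2f_{ek}})$ and stability bounds $\E_{c\sim\nu}p_{t,c,k}\le 4f_{ek}$, so this term is $O(\gamma K)$ per pair and $O(\gamma KT)$ in total.

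\textbf{Assembly and the main obstacle.} Collecting the pieces gives $\Reg=O(\gamma KT+\eta KT+e^{-\iota}\tfrac{T}{L}KT+\tfrac{\log K}{\eta}+L)$; the remaining terms $\tfrac{\iota}{L}KT$ and $\tfrac{\gamma^2 L}{\iota}KT$ in the statement arise from controlling the Bernstein slack in $\hatf_{ek}$ and in the per-arm loss-observation counts in aggregate (which is how the paper avoids invoking \pref{lem:expected_bias} directly), while the $+L$ is the cost of the first epoch, during which FTRL is not yet running. I expect the genuinely delicate part to be making the decoupling of step (iii) rigorous: one must track exactly how the randomness feeding $\hatf_e$ and the randomness feeding the FTRL iterates used in epoch $e$ become conditionally independent given the state at the end of epoch $e-2$, which hinges on (a) the snapshot used in epoch $e$ being taken two epochs earlier, (b) the per-pair coin splitting each pair into a ``loss'' and a ``frequency'' round, and (c) loss estimators produced in one epoch being consumed by FTRL only in the next --- getting any of these offsets wrong reintroduces precisely the correlations (and the resulting $T^{2/3}$-type loss) the construction is designed to kill. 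A secondary technical point is keeping the stability estimate of step (i) at $O(L)$ rather than $O(L/\gamma)$, which requires exploiting the inverse dependence of the loss-estimate magnitude on the observation frequency.
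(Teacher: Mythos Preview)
Your decomposition and the good-event/FTRL/comparator-bias parts are broadly right, but there is a genuine gap in step (iii), specifically in the \emph{learner} bias. You assert that ``the FTRL iterates $p_{t,c}$ active during epoch $e$ [are] built only from loss estimators of epochs $\le e-1$'' and hence are conditionally independent of $\hatf_{e}$. This is false for the algorithm as written: $p_{t,c}=\argmin_x\langle x,\sum_{s<t}\hatell_s(c)\rangle-\eta^{-1}F(x)$ includes the loss estimators $\hatell_{s}$ for earlier rounds $s$ \emph{within the same epoch} $e$, and each such $\hatell_{s}$ has $\hatf_{e,k}+\tfrac32\gamma$ in its denominator. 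Consequently, for all but the first pair of rounds in an epoch, $p_{t,c}$ depends on $\hatf_{e}$, and your factorisation $\E\big[\E_{c\sim\nu}[p_{t,c,k}\,\ell]\,\big|\,\calF_{e-2}\big]\cdot\E\big[1-\tfrac{f_{ek}}{\hatf_{ek}+\frac32\gamma}\,\big|\,\calF_{e-2}\big]$ is not valid. Your mechanism (c) (``loss estimators produced in one epoch being consumed by FTRL only in the next'') is simply not a property of \pref{alg: cross learning ftrl}; the $t_\ell/t_f$ split decouples $\hatf_e$ from the \emph{epoch $e{-}1$} loss estimators, but not from the epoch-$e$ ones that also feed $p_t$.

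The paper resolves precisely this by introducing the proxies $\tildeell_{tck}=\hatell_{tck}/\beta_{ek}$ (which replaces $\hatf_{ek}$ by the deterministic $f_{ek}$) and $\tildep_{t}$ (the FTRL iterate using $\hatell$ up to epoch $e{-}1$ but $\tildeell$ within epoch $e$); both are independent of $\hatf_e$ given $s_e$. The regret is then split into four pieces, $\bias_1=\langle p-u,\ell-\tildeell\rangle$, $\bias_2=\langle\tildep-u,\tildeell-\hatell\rangle$, $\bias_3=\langle p-\tildep,\tildeell-\hatell\rangle$, and $\ftrl$. Your ``learner bias'' and ``comparator bias'' correspond roughly to $\bias_1$ and the $u$-part of $\bias_2$; what you are missing is $\bias_3$, which handles the intra-epoch correlation you overlooked by bounding $\sum_k|p_{tck}-\tildep_{tck}|\,|1-\beta_{ek}|\le 3\sum_k\tildep_{tck}(1-\beta_{ek})^2$ pointwise under $G$ and then using $|1-\beta_{ek}|^2\lesssim \tfrac{\iota}{f_{ek}L}+\tfrac{\gamma^2 L}{\iota f_{ek}}$. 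This is exactly where the $\tfrac{\iota}{L}KT$ and $\tfrac{\gamma^2L}{\iota}KT$ terms you could not account for come from; they are not ``Bernstein slack'' in any aggregate count, but the price of correcting the within-epoch dependence of $p_t$ on $\hatf_e$.
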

Tuning $\iota = 2\log(8KT)$, $L=\sqrt{\frac{\iota KT}{\log(K)}} = \widetilde\Theta(\sqrt{KT})$, $\gamma = \frac{16\iota}{L} = \widetilde\Theta(1/\sqrt{KT})$, and $\eta = \frac{\gamma}{2(2L\gamma+\iota)} = \widetilde\Theta(1/\sqrt{KT})$ yields a regret bound of

\begin{align*}
    \Reg=\widetilde O\left(\sqrt{KT}\right)\,.
\end{align*}

\paragraph{Computational efficiency.}
In general, the computational complexity is $\min\{tK,|\calC|\}$ and the memory complexity $\min\{t, |\calC|K\}$, where the agent is either keeping a table of all $K\times|\calC|$ losses in memory, which are updated for all contexts in every round, or the agent keeps all previous loss functions in memory and recomputes the losses of all actions for the context they observe. (This is assuming that we can store and evaluate the loss function with $O(1)$ memory and compute.)
In special cases, this can be significantly more efficient. In both sleeping bandits as well as bidding in first-price auctions, we can store the accumulated loss functions in $O(1)$, which means that we have a total per-step runtime and memory complexity of $O(K)$. This is on par with the $O(T^{2/3})$ regret algorithm of \cite{balseiro2019contextual}, which also has a per-step runtime and memory complexity of $O(K)$.

\subsection{Analysis overview}

We begin with a high-level overview of the analysis. Fix any $\pi: [C] \rightarrow [K]$, and let for each $c \in \calC$, let $u_{c} = e_{k} \in \Delta([K])$. We can then write the regret induced by this policy $\pi$ in the form 

\begin{equation}\label{eq:reg_u_def}
    \Reg(u) = \E\left[\sum_{t=1}^{T}\langle q_{t, c_t} - u_{c_t}, \ell_{t, c_t}\rangle\right].
\end{equation}

We would like to upper bound this quantity (for an arbitrary $u$). To do so, we would like to relate it to $\E\left[\sum_{t=1}^{T}\langle p_{t, c_t} - u_{c_t}, \hatell_{t, c_t}\rangle\right]$, which we can bound through the guarantees of FTRL. To do so, we will introduce two new proxy random variables $\tildeell_{tc} \in \mathbb{R}^{K}$ and $\tildep_{tc} \in \Delta([K])$ which have the property that they are independent of $\hatf_{e}$ conditioned on the snapshot at the end of epoch $e-2$.

Specifically, recall that $s_{e}$ is the snapshot determined at the end of $e-2$. Then, conditioned on $s_{e}$ (and in particular, writing $\Ee[\cdot]$ to denote $\E[\cdot \mid s_e]$), we define:

\begin{itemize}
    \item $f_{e} = \E_{c \sim \nu}[s_{ec}/2]$. Note that the $\hatf_e$ used by \pref{alg: cross learning ftrl} is an unbiased estimator of $f_e$.
    \item $\beta_{ek}=\frac{f_{ek}+\gamma}{\hatf_{ek}+\frac{3}{2}\gamma}$ is a deterministic function of $\hatf_{ek}$.
\item
For each $t\in\cT_e$, we let $\tildeell_{tck} = \frac{\hatell_{tck}}{\beta_{ek}}= \frac{2\ell_{tck}}{f_{ek}+\gamma}\mathbb{I}\left(A_t=k\land S_t\land t\in\cT^\ell_e\right)$. $\tildeell_{tck}$ is a loss estimator independent of $\hatf_e$ such that $\Ee[\tildeell_{tck}]=\frac{f_{ek}}{f_{ek}+\gamma}\ell_{tck} \leq 1$. Since $f_{ek}$ is a determistic function of $s_e$, $\tildeell$ is independent of $\hatf_{ek}$ conditioned on $s_{e}$.
\item For each $t\in\cT_{e}$, we let $\tildep_{tc} = \argmin_{x\in\Delta([K])}\ip{x,\sum_{e'=1}^{e-1}\sum_{s\in\cT_{e'}}\hatell_{sc}+\sum_{t'\in\cT_e,t'<t}\tildeell_{t'c}}-\eta^{-1}F(x) \propto s_{e+1,c}\circ\exp(-\eta\sum_{t'\in\cT_e,t'<t}\tildeell_{t'c})$. $\tildep$ can be thought of as the action played by an FTRL algorithm which consumes the loss estimators $\hatell$ up through epoch $e-1$, but uses our new pseudo-estimators $\tildeell$ during epoch $e$. Like $\tildeell$, $\tildep$ is independent of $\hatf_{ek}$ conditioned on $s_{e}$.
\end{itemize}

We perform all our analysis conditioned on the following two events occurring with high probability. First, that our context frequency estimators $\hatf_{ek}$ concentrate -- i.e., that $|\hatf_{ek} - f_{ek}|$ is small w.h.p. Second, that our loss proxies $\tildeell$ concentrate in aggregate over epochs, i.e. that $\sum_{t \in \cT_{e}}\tildeell_{tck}$ is never too large. Both conditions together are sufficient to guarantee that the aggregation of $\sum_{t \in \cT_{e}}\hatell_{tck}$ is also never too large, which is crucial in guaranteeing $q_t=p_t$.

Conditioned on these two concentration events holding, we can strongly bound many of the quantities. Most notably, we can show that, with high probability, $q_{t, c_t} = p_{t, c_t}$ for all rounds $t$. This allows us to replace  the $q_{t, c_t}$ terms in \eqref{eq:reg_u_def} with $p_{t, c_t}$. We then split $\Reg(u)$ into four terms and label them as follows:
\begin{align*}
    \Reg(u) &=
     \underbrace{\E\left[\sum_{t=1}^{T}\ip{p_{t,c_t}- u_{c_t},\ell_{t,c_t}-\tildeell_{t,c_t}}\right]}_{\bias_1}+\underbrace{\E\left[\sum_{t=1}^{T}\ip{\tildep_{t,c_t}-u_{c_t},\tildeell_{t,c_t}-\hatell_{t,c_t}}\right]}_{\bias_2}\\
     &\qquad+\underbrace{\E\left[\sum_{t=1}^{T}\ip{p_{t,c_t}-\tildep_{t,c_t},\tildeell_{t,c_t}-\hatell_{t,c_t}}\right]}_{\bias_3}+\underbrace{\E\left[\sum_{t=1}^{T}\ip{p_{t,c_t}-u_{c_t},\hatell_{t,c_t}}\right]}_{\ftrl}.
\end{align*}

We then show (again, subject to these concentration bounds holding) that each of these terms is at most $\widetilde O(\sqrt{KT})$. Very briefly, this is for the following reasons:

\begin{itemize}
    \item $\bias_1$ and $\bias_2$: Here we use the independence structure we introduce by defining $\tildeell$ and $\tildep$ (along with scheduling the different pieces of the algorithm across different epochs). For example, conditioned on $s_{e}$, we know that $\tildeell_t$ is independent of $p_t$, so we can safely replace the $\tildeell_t$ in $\bias_1$ with its expectation. Similarly, $\tildep_t$ is independent of both $\tildeell_t$ and $\hatell_t$ conditioned on $s_e$. 
    \item $\bias_3$: Here we do not have independence between the two sides of the inner product. But fortunately, we can directly bound the magnitude of the summands in this case, since we can show that $|\tildep_{tc} - p_{tc}|$ and $|\tildeell_{tc} - \hatell_{tc}|$ are both small with high probability (in fact, for all $c \in \calC$ simultaneously). 
    \item $\ftrl$: Finally, this term is bounded from the standard analysis of FTRL.
\end{itemize}

The full proof of \pref{thm:main} can be found in the Appendix (\pref{sec:fullproof}) of the Supplementary Material.

\section{Applications}

\subsection{Bidding in first-price auctions with unknown value distribution}\label{sec:app_bidding}

We formally model the first-price auction bidding problem as follows. A bidder participates in $T$ repeated auctions. In auction $t$, they have a value $v_t \in [0, 1]$ for the current item being sold, with $v_t$ being drawn i.i.d. from some unknown distribution $\nu$ supported on $[0, 1]$. They submit a bid $b_t \in [0, 1]$ into the auction. We let $m_t \in [0, 1]$ denote the highest other bid of any other bidder into this auction (and allow the adversary to choose the sequence of $m_t$ obliviously in advance). If $b_t \geq m_t$, the bidder wins the auction and receives the item (and net utility $v_t - b_t$); otherwise, the bidder loses the auction and receives nothing. In both cases, the bidder only observes the binary feedback of whether or not they won the item -- they do not observe the other bids or $m_t$. 

The bidder would like to minimize their regret with respect to the best fixed mapping $b^*:[0, 1]\rightarrow [0, 1]$ from values to bids, i.e.,

$$\Reg =  \max_{b^*} \sum_{t=1}^{T} (v_t-b^*(v_t))\mathbb{I}(b^*(v_t) \geq m_t) - \sum_{t=1}^{T} (v_t-b_t)\mathbb{I}(b_t \geq m_t).$$

In \citep{balseiro2019contextual}, the authors prove a lower bound of $\Omega(T^{2/3})$ for this problem (based on a related pricing lower bound of \cite{kleinberg2003value}). By applying their algorithm for cross-learning between contexts, they show that it is possible to match this in the case where the buyer knows their value distribution $\nu$, but only achieve an upper bound of $\widetilde{O}(T^{3/4})$ in the unknown distribution case. By applying \pref{alg: cross learning ftrl}, we show that it is possible to achieve a regret bound of $\widetilde{O}(T^{2/3})$ in this setting, nearly (up to logarithmic factors in $T$) matching the lower bound. 

\begin{corollary}
There exists an efficient learning algorithm that achieves a regret bound of $\widetilde{O}(T^{2/3})$ for the problem of learning to bid in a first-price auction with an unknown value distribution.
\end{corollary}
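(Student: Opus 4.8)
The plan is to reduce the first-price auction problem to an instance of the contextual cross-learning problem with non-uniform action sets, and then invoke \pref{thm:main} after carefully checking that the reduction incurs only a polynomially bounded discretization error. The context is the bidder's value $v_t \sim \nu$, and the action is the bid $b_t$. The key structural fact is that the cross-learning property holds: once the bidder learns whether $b_t \geq m_t$ (equivalently, once they learn $m_t$ up to the resolution of knowing whether each possible bid would have won), they can compute their counterfactual utility $(v - b_t)\mathbb{I}(b_t \geq m_t)$ for \emph{every} value $v$, not just the realized $v_t$. So this is precisely the setting where \pref{alg: cross learning ftrl} applies.

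First I would discretize both the value space and the bid space into a grid of width $\delta$, so that $\calC$ and the per-context action set $\calA_c$ both have size $O(1/\delta)$, giving $K = O(1/\delta)$. I would argue that restricting to bids on the grid and bidding, say, the largest grid point below $v_t$ loses at most $O(\delta)$ utility per round compared to the continuous optimum $b^*$ — here one has to be slightly careful because $b^*$ is an arbitrary function, but a standard argument (round $b^*(v)$ down to the grid; winning can only be lost on a $\delta$-measure of ``boundary'' values of $m_t$, and even then the lost utility is $O(1)$ on an event that, summed over $t$, contributes $O(\delta T)$ after also accounting for the value-discretization) shows the discretized benchmark is within $O(\delta T)$ of the true benchmark. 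Converting losses to $\ell_{tk}(c) = 1 - (\text{utility})\in[0,1]$ puts us exactly in the framework of Section~2.

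Then I would apply \pref{thm:main} (in its non-uniform action set form) to get regret $\widetilde{O}(\sqrt{KT}) = \widetilde{O}(\sqrt{T/\delta})$ against the discretized benchmark, and add the $O(\delta T)$ discretization penalty, for a total of $\widetilde{O}(\sqrt{T/\delta} + \delta T)$. Optimizing over $\delta$ gives $\delta = \Theta(T^{-1/3})$ and a bound of $\widetilde{O}(T^{2/3})$. I would also note the efficiency claim: as mentioned in the computational-efficiency paragraph, the accumulated loss functions for the first-price auction are piecewise-linear with $O(1)$ pieces (the utility $(v-b_t)\mathbb{I}(b_t\ge m_t)$ as a function of the discretized value is a simple step/linear function), so the per-round running time is $O(K) = O(T^{1/3})$, hence the algorithm is efficient.

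The main obstacle I anticipate is the discretization argument for the benchmark: one must show that forcing both values and bids onto a grid, and in particular forcing the comparator policy onto the grid, costs only $O(\delta T)$ in expectation rather than something worse. The subtlety is that a small perturbation of a bid can flip the win/loss indicator, and the lost utility upon flipping is $O(1)$, not $O(\delta)$; one needs to argue that the \emph{aggregate} measure of values/rounds where such a flip is harmful is $O(\delta)$-fraction (this uses that $m_t$ is fixed in advance and $v_t$ is continuous-ish after discretization, or more simply one rounds $b^*$ down and observes that the only way to lose a previously-won auction is if $m_t$ lies in the width-$\delta$ interval $(b^*_{\text{grid}}(v), b^*(v)]$, contributing at most $\delta$ per round on average if $m_t$ were random — but since $m_t$ is adversarial one instead pays the full $O(1)$ only when $m_t$ happens to fall in that interval, so the cleanest route is to make the value grid fine enough that we can afford to round $b^*$ \emph{up} to the grid, never losing an auction, and only sacrificing $O(\delta)$ in the price paid on won auctions plus $O(\delta)$ from value discretization). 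Getting this bookkeeping exactly right, while routine, is the one place where care is genuinely required; everything else is a direct invocation of \pref{thm:main}.
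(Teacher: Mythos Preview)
Your proposal is correct and follows essentially the same route as the paper: discretize the bid space to $K=\Theta(T^{1/3})$ arms, invoke \pref{thm:main} to get $\widetilde O(\sqrt{KT})$ regret, and absorb the $O(T/K)$ discretization error. The paper's proof is somewhat leaner than what you sketch: it does \emph{not} discretize the value space (the context set is left as $\calC=[0,1]$, since \pref{thm:main} has no dependence on $|\calC|$), it does not use non-uniform action sets (bids above the value are simply bad arms the algorithm avoids), and it dispatches the discretization worry you flag in one line by observing that rounding the comparator's bid \emph{up} to the nearest grid point can never lose a previously-won auction and costs at most $1/K$ in payment per round---exactly the ``round up'' argument you land on at the end of your discussion. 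So your anticipated obstacle is real but has the simple resolution you yourself identify; the rest of your extra machinery (value grid, restricted action sets) is harmless but unnecessary.
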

\begin{proof}
Let $K = T^{1/3}$. We first discretize the set of possible bids to multiples of $1/K = T^{-1/3}$. Note that this increases the overall regret by at most $T/K = T^{2/3}$; in particular, if bidding $b$ results in expected utility $U$ for a bidder with some fixed value $v$, bidding any $b' > b$ results in utility at least $u - (b' - b)$. 

Now, we have an instance of the contextual bandits problem with cross-learning where $\calC = [0, 1]$, $\nu$ is the distribution over contexts, the arms correspond to the $K$ possible bids, and $\ell_{tb}(v) = 1 - (v - b)\mathbb{I}(b \geq m_t)$. This setting naturally has cross-learning; after bidding into an auction and receiving (or not receiving) the item, the agent can figure out what net utility they would have received under any value they could possibly have for the item. From \pref{thm:main}, this implies there is an algorithm which gets $\widetilde{O}(\sqrt{TK}) = \widetilde{O}(T^{2/3})$ regret.
\end{proof}

\subsection{Sleeping bandits with stochastic action set}\label{sec:app_sleeping}

In the sleeping bandits problem, there are $K$ arms. Each round $t$ (for $T$ rounds), a non-empty subset $S_t \subseteq [K]$ of the arms is declared to be ``active''. The learner must select one of the active arms $k \in S_t$, upon which they receive some loss $\ell_{tk}$. We assume here that the losses are chosen by an oblivious adversary, but the $S_t$ are sampled independently every round from an unknown distribution $\nu$. The learner would like low regret compared to the best fixed policy $\pi:[2^{K}] \rightarrow [K]$ mapping $S_t$ to an action $\pi(S_t)$ to play.

Note that this fits precisely within the contextual bandits with cross-learning framework, where the contexts $c_t$ are the sets $S_t$, we have non-uniform action sets $A_{c} = S \subseteq [K]$, and cross-learning is possible since the loss $\ell_{tck}$ of arm $k$ in context $c$ in round $t$ does not depend on $c$ as long as $k$ belongs to the set corresponding to the context (and if $k$ does not, we cannot even play $k$).

\begin{corollary}
There exists an efficient learning algorithm that achieves a regret bound of $\widetilde O(\sqrt{KT})$ for the sleeping bandits problem with stochastic action sets drawn from an unknown distribution. 
\end{corollary}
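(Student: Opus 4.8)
The plan is to recognize this problem as a special case of contextual bandits with cross-learning and non-uniform action sets, run \pref{alg: cross learning ftrl} on it, and invoke \pref{thm:main}. Concretely, I would take the context to be the active set, $c_t = S_t \in \calC = 2^{[K]}$, let $\nu$ be the unknown distribution from which the $S_t$ are drawn, use the non-uniform action sets $\calA_c = c$, and define $\ell_{tck} = \ell_{tk}$ for every arm $k \in c$. The only point that needs checking is that this genuinely supplies the cross-learning feedback required by \pref{alg: cross learning ftrl}: the framework asks the learner to observe $\ell_{t,A_t}(c)$ for \emph{all} contexts $c$, whereas the sleeping-bandit learner only observes the scalar $\ell_{t,A_t}$. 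But this scalar is context-independent, so it pins down $\ell_{t,A_t}(c)$ for every context $c$ in which $A_t$ is active, and for the remaining contexts (those with $A_t \notin \calA_c$) we may extend $\ell_{t,A_t}(\cdot)$ by any convention. Such an extension is harmless because neither the algorithm nor the comparator ever reads the loss of an arm outside the active set: the comparator class $\Pi$ maps each $c$ into $\calA_c$ by definition, and $p_{t,c}$, $q_{t,c}$ and the snapshots $s_{e,c}$ are all supported on $\calA_c$ (this is exactly the content of the non-uniform action set extension of the analysis). Hence the reduction preserves the regret objective verbatim.

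Given the reduction, \pref{thm:main} with the tuning $\iota = 2\log(8KT)$, $L = \widetilde\Theta(\sqrt{KT})$, $\gamma = \widetilde\Theta(1/\sqrt{KT})$, $\eta = \widetilde\Theta(1/\sqrt{KT})$ immediately yields $\Reg = \widetilde O(\sqrt{KT})$, which is nearly tight since the always-available case $S_t = [K]$ already forces $\Omega(\sqrt{KT})$. For efficiency, I would observe that in this instance the quantities $\hatf_{ek}$ and $\hatell_{t,k,c}$ do not depend on $c$ beyond the requirement $k \in c$, so the cumulative loss $\sum_{s}\hatell_{s,k}(c)$ fed into the FTRL step can be stored as a single scalar per arm; computing $p_{t,c_t}$ is then $O(K)$ (one exponential and a normalization per arm), and the snapshotting, rejection sampling, and estimator updates are likewise $O(K)$, giving $O(K)$ time and memory per round, as anticipated in the computational-efficiency discussion after \pref{thm:main}.

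The main obstacle I anticipate is not in the quantitative bound — that is an immediate consequence of \pref{thm:main} — but in carefully justifying the first paragraph: specifically, verifying that discarding the loss information of inactive arms (and arbitrarily extending $\ell$ there) changes nothing, which reduces to confirming the invariant that every distribution the algorithm constructs for context $c$ is supported on $\calA_c$. Once that invariant is in hand, the corollary follows.
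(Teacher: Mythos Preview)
Your proposal is correct and follows essentially the same route as the paper: the paper's entire argument is the paragraph preceding the corollary, which casts sleeping bandits as contextual bandits with cross-learning by taking $c_t = S_t$, $\calA_c = c$, and noting that $\ell_{tck}$ is context-independent so cross-learning is available; the bound then follows from \pref{thm:main}. Your write-up is in fact more explicit than the paper's, since you spell out why the extension of $\ell_{t,A_t}(\cdot)$ to contexts with $A_t \notin \calA_c$ is harmless and why the per-round cost is $O(K)$, points the paper only alludes to.
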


\section{Conclusion}
We resolved the open problem of \cite{balseiro2019contextual} with respect to optimal cross-context learning when the distribution of contexts is stochastic but unknown.
As a side result, we obtained an almost optimal solution for adversarial sleeping bandits with stochastic arm-availabilities. Not only is this algorithm the first to obtain optimal polynomial dependencies in the number of arms and the time horizon, it is also the first computationally efficient algorithm obtaining a reasonable bound. 
Finally, we closed the gap between upper and lower bounds for bidding in first-price auctions.
\bibliographystyle{plainnat}
\bibliography{mybib}
\newpage

\appendix

\section{Auxiliary lemmas}

We will make use of following existing results from the literature in our proof of \pref{thm:main}. First, we present two concentration results (one for negatively correlated random variables that show up in the analysis of bandit algorithms, and a slight generalization of the standard Bernstein inequality).

\begin{lemma}[Lemma 12.2 \citep{banditbook}]
\label{lem: ix sum}
Let $\bbF = (\calF_t)_{t=0}^T$ be a filtration and for $i \in [K]$ let $(\widetilde Y_{ti})_t$ be $\bbF$-adapted
such that:
\begin{enumerate}
\item for any $S \subset [k]$ with $|S| > 1$, $\E\left[
\Pi_{i\in S}\widetilde Y_{ti}\,|\,\calF_{t-1}\right]
\leq 0$; and
\item $\E\left[\widetilde Y_{ti}\,|\,\calF_{t-1}\right]
= y_{ti}$ for all $t \in [T]$ and $i \in [k]$.
\end{enumerate}
Furthermore, let $(\alpha_{ti})_{ti}$ and $(\lambda_{ti})_{ti}$ be real-valued $\bbF$-predictable random sequences
such that for all $t, i$ it holds that $0 \leq \alpha_{ti}\widetilde Y_{ti}\leq 2\lambda_{ti}$. Then, for all $\delta\in(0,1)$,
\begin{align*}
    \bbP\left(
\sum_{t=1}^T\sum_{i=1}^K \alpha_{ti}\left(\frac{\widetilde Y_{ti}}{1+\lambda_{ti}}-y_{ti}\right)\geq \log(\frac{1}{\delta})\right)\leq \delta\,.\end{align*}
\end{lemma}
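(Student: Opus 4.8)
The plan is to run the standard exponential-supermartingale (Chernoff) argument, adapted to handle negatively correlated increments. Set, for $t\in[T]$,
\[
Z_t \;=\; \sum_{i=1}^{K}\alpha_{ti}\Bigl(\tfrac{\widetilde Y_{ti}}{1+\lambda_{ti}}-y_{ti}\Bigr),
\qquad
M_t \;=\; \exp\Bigl(\textstyle\sum_{s=1}^{t}Z_s\Bigr),\quad M_0=1.
\]
I would show that $(M_t)_{t=0}^{T}$ is an $\bbF$-supermartingale, hence $\E[M_T]\le M_0=1$, and then conclude by Markov: $\bbP\bigl(\sum_{t=1}^T Z_t\ge\log(1/\delta)\bigr)=\bbP(M_T\ge 1/\delta)\le\delta\,\E[M_T]\le\delta$, which is the claim. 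Since $M_t=M_{t-1}\exp(Z_t)$ and $M_{t-1}$ is $\calF_{t-1}$-measurable, everything reduces to the one-step bound $\E[\exp(Z_t)\mid\calF_{t-1}]\le 1$.

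The key ingredient is the pointwise inequality $\exp\bigl(\tfrac{z}{1+\lambda}\bigr)\le 1+z$ valid whenever $0\le z\le 2\lambda$, which I would prove via $\tfrac{z}{1+\lambda}\le\tfrac{2z}{2+z}\le\log(1+z)$ (the first step is just $\lambda\ge z/2$; the second is a one-variable estimate that holds for all $z\ge0$, e.g. by comparing Taylor expansions at $0$ and checking the asymptotics). Writing $z_{ti}:=\alpha_{ti}\widetilde Y_{ti}\in[0,2\lambda_{ti}]$, this gives
\[
\exp(Z_t)=\exp\Bigl(-\textstyle\sum_i\alpha_{ti}y_{ti}\Bigr)\prod_{i=1}^{K}\exp\Bigl(\tfrac{z_{ti}}{1+\lambda_{ti}}\Bigr)\;\le\;\exp\Bigl(-\textstyle\sum_i\alpha_{ti}y_{ti}\Bigr)\prod_{i=1}^{K}(1+z_{ti}).
\]
Expanding $\prod_i(1+z_{ti})=\sum_{S\subseteq[K]}\prod_{i\in S}\alpha_{ti}\widetilde Y_{ti}$ and taking $\E[\,\cdot\mid\calF_{t-1}]$: the $\alpha_{ti}$ are predictable and come out of the expectation; the $S=\emptyset$ term is $1$; each singleton $\{i\}$ contributes $\alpha_{ti}y_{ti}$; and every $|S|>1$ term is nonpositive by hypothesis~1 (using $\alpha_{ti}\ge0$, which holds in the intended regime, so that $\prod_{i\in S}\alpha_{ti}\ge0$). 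Hence $\E[\prod_i(1+z_{ti})\mid\calF_{t-1}]\le 1+\sum_i\alpha_{ti}y_{ti}\le\exp(\sum_i\alpha_{ti}y_{ti})$ by $1+x\le e^x$, and multiplying by the $\calF_{t-1}$-measurable prefactor $\exp(-\sum_i\alpha_{ti}y_{ti})$ yields $\E[\exp(Z_t)\mid\calF_{t-1}]\le 1$, as needed.

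I expect the only real content to be the two-sided pointwise inequality $\exp(z/(1+\lambda))\le 1+z$ on $0\le z\le 2\lambda$: this is precisely where the boundedness hypothesis $\alpha_{ti}\widetilde Y_{ti}\le 2\lambda_{ti}$ is consumed, and landing exactly at $1+z$ (rather than a worse linear bound) is what makes the product expansion collapse cleanly against hypothesis~1. A secondary point of care is the sign of $\alpha_{ti}$ in the expansion of the product, but in the intended application $\alpha_{ti}\ge0$ and the argument goes through directly. Everything else — integrability of $M_t$ (the increments $\alpha_{ti}\widetilde Y_{ti}/(1+\lambda_{ti})\le 2$ are bounded, so $\exp(Z_t)$ is integrable), the supermartingale property giving $\E[M_T]\le 1$, and the final Markov step — is routine bookkeeping.
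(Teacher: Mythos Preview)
The paper does not supply its own proof of this lemma; it is quoted verbatim as Lemma~12.2 of \cite{banditbook} and used as a black box in the proof of \pref{lem: conc losses}. Your argument is correct and is exactly the standard proof from that reference: build the exponential process $M_t=\exp(\sum_{s\le t}Z_s)$, reduce the supermartingale property to the one-step bound $\E[\exp(Z_t)\mid\calF_{t-1}]\le 1$, use the pointwise estimate $\exp\bigl(\tfrac{z}{1+\lambda}\bigr)\le 1+z$ on $0\le z\le 2\lambda$ (your chain $\tfrac{z}{1+\lambda}\le\tfrac{2z}{2+z}\le\log(1+z)$ is the clean way to see it), expand the product $\prod_i(1+\alpha_{ti}\widetilde Y_{ti})$, and kill the cross terms with hypothesis~1. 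The only caveat you flag---that one needs $\alpha_{ti}\ge 0$ so that $\prod_{i\in S}\alpha_{ti}\ge 0$ in the expansion---is real but harmless: in the book's statement and in the paper's single application (where $\alpha_{tk}=\gamma>0$) this is satisfied, and indeed the hypothesis $0\le\alpha_{ti}\widetilde Y_{ti}$ together with $\widetilde Y_{ti}$ not identically zero already forces the sign.
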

\begin{lemma}[Bernstein type inequality \citep{banditbook} exercise 5.15]
\label{lem: bernstein}
Let $X_1, X_2, \dots , X_n$ be a
sequence of random variables adapted to the filtration $\bbF = (\calF_t)_t$. Abbreviate
$\E_t[\cdot] = \E[\cdot \,|\, \calF_t]$ and define $\mu_t = \E_{t-1}[X_t]$. If $\zeta > 0$ and $\zeta(X_t - \mu_t) \leq 1$ almost surely, then for all $\delta \in (0, 1)$ 
\begin{align*}
    \bbP\left(\sum_{t=1}^n (X_t-\mu_t)\geq \zeta\sum_{t=1}^n\E_{t-1}[(X_t-\mu_t)^2]+\frac{1}{\zeta}\log(\frac{1}{\delta})\right)\leq \delta\,.
\end{align*}
\end{lemma}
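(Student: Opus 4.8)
The plan is to run the standard exponential-supermartingale (Chernoff/Freedman) argument. First I would center the variables: set $Y_t = X_t - \mu_t$, so that $\E_{t-1}[Y_t] = 0$ by definition of $\mu_t$, and $\zeta Y_t \le 1$ almost surely by hypothesis. The goal then is to control $\bbP(\sum_{t=1}^n Y_t \ge \zeta \sum_{t=1}^n \E_{t-1}[Y_t^2] + \tfrac{1}{\zeta}\log(1/\delta))$.

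The one analytic ingredient is the elementary bound $e^x \le 1 + x + x^2$ for all $x \le 1$ (the map $x \mapsto (e^x - 1 - x)/x^2$ is increasing and equals $e-2 < 1$ at $x = 1$). Applying it with $x = \zeta Y_t$ and taking $\E_{t-1}$ gives
\begin{align*}
\E_{t-1}\!\left[e^{\zeta Y_t}\right] \;\le\; 1 + \zeta \E_{t-1}[Y_t] + \zeta^2 \E_{t-1}[Y_t^2] \;=\; 1 + \zeta^2 \E_{t-1}[Y_t^2] \;\le\; \exp\!\left(\zeta^2 \E_{t-1}[Y_t^2]\right).
\end{align*}
Next I would define
\begin{align*}
M_t \;=\; \exp\!\left(\zeta \sum_{s=1}^t Y_s \;-\; \zeta^2 \sum_{s=1}^t \E_{s-1}[Y_s^2]\right), \qquad M_0 = 1 .
\end{align*}
Since $\sum_{s=1}^t \E_{s-1}[Y_s^2]$ is $\calF_{t-1}$-measurable, the display above shows $\E_{t-1}[M_t] \le M_{t-1}$, i.e. $(M_t)$ is a nonnegative supermartingale, hence $\E[M_n] \le M_0 = 1$. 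Markov's inequality gives $\bbP(M_n \ge 1/\delta) \le \delta$; unwinding the definition of $M_n$ (and dividing through by $\zeta > 0$), the event $\{M_n \ge 1/\delta\}$ is exactly $\{\sum_{s=1}^n Y_s \ge \zeta \sum_{s=1}^n \E_{s-1}[Y_s^2] + \tfrac{1}{\zeta}\log(1/\delta)\}$, which is the claimed event. Substituting back $Y_s = X_s - \mu_s$ finishes the proof.

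There is essentially no hard step here — this is a textbook exercise (it is cited as Exercise 5.15 of \citep{banditbook}), and the whole argument is three lines once the supermartingale is written down. The only point needing mild care is the numerical constant in the quadratic upper bound for $e^x$ on $(-\infty,1]$: one wants $e^x \le 1 + x + c\,x^2$ with $c = 1$, which is precisely what makes the conditional-variance term in the statement carry coefficient exactly $\zeta$ (rather than the $\zeta/2$ one would get in the small-$\zeta$/sub-Gaussian regime). Matching the exact form stated in the lemma is therefore just a matter of using this one-sided bound together with the hypothesis $\zeta(X_t - \mu_t) \le 1$.
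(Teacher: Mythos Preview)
Your proof is correct and is precisely the standard exponential-supermartingale argument for this Freedman/Bernstein-type bound. Note that the paper does not supply its own proof of this lemma: it is stated as an auxiliary result and attributed to \citep{banditbook}, Exercise~5.15, so there is no in-paper proof to compare against. Your write-up is exactly how one would solve that exercise.
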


We will also need the following regret guarantee for FTRL / multiplicative weights. 

\begin{lemma}[\citep{hazan2016introduction}, Theorem 1.5]\label{lem: ftrl regret}
Let $\ell_{1}, \ell_{2}, \dots, \ell_{T}$ be a sequence of losses in $\mathbb{R}_{\geq 0}^K$, and fix an $\eta > 0$. Then, if we let

$$p_t = \argmin_{x\in\Delta([K])} \ip{x,\sum_{s=1}^{t-1}\ell_{s}}-\eta^{-1}F(x)$$

\noindent
(where $F(x)$ is the unnormalized neg-entropy function), then we have that

$$\sum_{t=1}^{T} \ip{p_t, \ell_t} - \min_{x^{*}\in\Delta([K])}\sum_{t=1}^{T} \ip{x^*, \ell_t} \leq \frac{\log K}{\eta} + \eta\sum_{t=1}^{T}\ip{p_t, \ell_t^2}.$$

Here $\ell_t^2$ denotes the vector formed by squaring each component of $\ell_t$.  
\end{lemma}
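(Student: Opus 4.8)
\textbf{Proof plan for the FTRL regret guarantee (Lemma~\ref{lem: ftrl regret}).}

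The plan is to carry out the standard ``follow-the-regularized-leader / multiplicative weights'' potential argument, specialized to entropic regularization over the simplex. First I would set $L_t = \sum_{s=1}^{t}\ell_s$ (with $L_0 = 0$), and define the potential $\Phi_t = \min_{x\in\Delta([K])}\left(\ip{x,L_t}-\eta^{-1}F(x)\right)$, so that $p_{t+1}$ is exactly the minimizer defining $\Phi_t$. The first step is the classical ``be-the-leader'' / telescoping inequality: comparing $\Phi_T$ against $\Phi_0$ and using that $F$ ranges in $[-\log K, 0]$ on the simplex gives $\sum_{t=1}^{T}\left(\Phi_{t-1} - \Phi_t + \ip{p_t,\ell_t}\right) \le \Phi_0 - \Phi_T + \sum_t \ip{p_t,\ell_t}$, and a short argument (or a direct appeal to the textbook statement) reduces the left-hand side to $\min_{x^*}\ip{x^*, L_T} + \frac{\log K}{\eta} + \sum_t\bigl(\Phi_{t-1} - \Phi_t + \ip{p_t,\ell_t}\bigr)$. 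Concretely, since the statement is quoted verbatim from \citep{hazan2016introduction}, I would really only need to reproduce the one nontrivial estimate, namely the per-round stability bound $\Phi_{t-1} - \Phi_t + \ip{p_t,\ell_t} \le \eta\ip{p_t,\ell_t^2}$, and then sum.

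The heart of the proof is this per-round bound, which I would establish via the explicit closed form of the entropic FTRL update. Because $F$ is the unnormalized negative entropy, the minimizer has the multiplicative-weights form $p_{t+1,i} \propto \exp(-\eta L_{t,i})$, hence $p_{t+1,i} = p_{t,i}\,\exp(-\eta \ell_{t,i})/Z_t$ where $Z_t = \sum_i p_{t,i}\exp(-\eta\ell_{t,i})$. A direct computation shows $\Phi_t - \Phi_{t-1} = \eta^{-1}\log Z_t$ (this is the standard identity relating the FTRL potential increment to the log-partition function of the exponential-weights update). It then remains to bound $\eta^{-1}\log Z_t$ from below by $\ip{p_t,\ell_t} - \eta\ip{p_t,\ell_t^2}$, equivalently $\log Z_t \ge -\eta\ip{p_t,\ell_t} - \eta^2\ip{p_t,\ell_t^2}$... wait, I need the inequality in the direction $\Phi_{t-1}-\Phi_t+\ip{p_t,\ell_t}\le\eta\ip{p_t,\ell_t^2}$, i.e. $-\eta^{-1}\log Z_t + \ip{p_t,\ell_t}\le\eta\ip{p_t,\ell_t^2}$, i.e. $\log Z_t \ge \eta\ip{p_t,\ell_t} - \eta^2\ip{p_t,\ell_t^2}$. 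This follows from the elementary scalar inequality $e^{-x}\ge 1-x$ together with $\log(1+y)\ge y - y^2$ for $y\ge -1/2$: applying $e^{-\eta\ell_{t,i}} \ge 1 - \eta\ell_{t,i}$... hmm, that gives the wrong direction, so instead I would use the sharper two-sided bound $e^{-x} \ge 1 - x$ is too weak; the right elementary fact is $e^{-x}\le 1 - x + x^2$ for $x\ge 0$, giving $Z_t \le 1 - \eta\ip{p_t,\ell_t} + \eta^2\ip{p_t,\ell_t^2}$, hence $\log Z_t \le -\eta\ip{p_t,\ell_t} + \eta^2\ip{p_t,\ell_t^2}$, which rearranges to exactly $\Phi_{t-1}-\Phi_t + \ip{p_t,\ell_t} \le \eta\ip{p_t,\ell_t^2}$ once one checks the sign of $\Phi_t - \Phi_{t-1}$ versus $\eta^{-1}\log Z_t$. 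I would nail down these signs carefully — that bookkeeping is where sign errors creep in.

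Finally, I would sum the per-round bound over $t = 1,\dots,T$ and telescope $\sum_t(\Phi_{t-1}-\Phi_t) = \Phi_0 - \Phi_T$. Using $\Phi_0 = \min_x(-\eta^{-1}F(x)) = 0$ (attained at a vertex) and $\Phi_T \ge \min_{x^*}\ip{x^*,L_T} - \eta^{-1}\cdot 0 \ge \min_{x^*}\ip{x^*, L_T} - \frac{\log K}{\eta}$ (using $-F(x) \le \log K$ on $\Delta([K])$), we get $\sum_t\ip{p_t,\ell_t} - \min_{x^*}\ip{x^*,L_T} \le \frac{\log K}{\eta} + \eta\sum_t\ip{p_t,\ell_t^2}$, which is the claimed bound. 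The only genuine obstacle is getting the per-round inequality and all the signs in the potential identity correct; everything else is routine telescoping and the trivial range bound on $F$. Since the lemma is cited from \citep{hazan2016introduction}, an acceptable alternative is simply to invoke that reference, but the self-contained argument above is short enough to include.
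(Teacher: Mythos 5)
The paper does not prove this lemma at all --- it is quoted directly from \citep{hazan2016introduction} --- so the only comparison is with the standard textbook argument, and your plan is exactly that argument: the log-partition/potential proof of exponential weights with the second-order term $\eta\ip{p_t,\ell_t^2}$ retained via $e^{-x}\le 1-x+x^2$ for $x\ge 0$. The ingredients you list (per-round stability bound, telescoping of $\Phi_t$, the range $F(x)\in[-\log K,0]$ on the simplex) are the right ones and suffice.

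That said, two sign/direction slips in your write-up need to be fixed before this is a proof rather than a plan, and you half-acknowledge them yourself. First, with the exponential-weights convention $p_{t,i}\propto e^{-\eta L_{t-1,i}}$ (which is what the algorithm in this paper actually uses, e.g.\ $\tildep_{tc}\propto s_{e+1,c}\circ\exp(-\eta\sum\tildeell)$; note the lemma statement's ``$-\eta^{-1}F(x)$'' with $F$ the neg-entropy is itself a sign quirk, since taken literally it makes the objective concave), the potential increment is $\Phi_t-\Phi_{t-1}=-\eta^{-1}\log Z_t$, not $+\eta^{-1}\log Z_t$; with the correct sign, your bound $\log Z_t\le-\eta\ip{p_t,\ell_t}+\eta^2\ip{p_t,\ell_t^2}$ immediately gives the per-round inequality $\Phi_{t-1}-\Phi_t+\ip{p_t,\ell_t}\le\eta\ip{p_t,\ell_t^2}$, so your earlier attempt to lower-bound $\log Z_t$ was the wrong target and should be deleted. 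Second, in the final telescoping step you invoke a \emph{lower} bound $\Phi_T\ge\min_{x^*}\ip{x^*,L_T}-\frac{\log K}{\eta}$, but what the summed inequality $\sum_t\ip{p_t,\ell_t}\le\eta\sum_t\ip{p_t,\ell_t^2}+\Phi_T-\Phi_0$ requires is an \emph{upper} bound on $\Phi_T$ (together with a lower bound on $\Phi_0$): namely $\Phi_T\le\ip{x^*,L_T}+\eta^{-1}F(x^*)\le\min_{x^*}\ip{x^*,L_T}$ and $\Phi_0=\min_x\eta^{-1}F(x)=-\frac{\log K}{\eta}$ (or, in your sign convention, $\Phi_T\le\min_{x^*}\ip{x^*,L_T}+\frac{\log K}{\eta}$ and $\Phi_0=0$). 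Both corrections are one-line fixes using only $F(x)\in[-\log K,0]$, so the argument goes through, but as written the last displayed chain does not follow from the inequalities you state.
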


\begin{lemma}
\label{lem: technical}
For any $k\geq 16$:
\begin{align*}
    f(k) =\sum_{i=\lfloor k/16\rfloor}^{\lfloor k/4\rfloor}\frac{i+1}{(1-2\sqrt{(i+1)/k})_++16/(k+1)}\cdot e^{-i}\leq 2\,.
\end{align*}
\end{lemma}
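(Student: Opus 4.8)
The plan is to show that the sum $f(k)$ is dominated by a rapidly decaying geometric-type series whose leading term is already small, so that the whole sum stays below $2$. The key observation is that the factor $e^{-i}$ decays extremely fast, while the rational prefactor $\frac{i+1}{(1-2\sqrt{(i+1)/k})_+ + 16/(k+1)}$ grows only polynomially in $i$ as long as the $(\cdot)_+$ term is bounded away from $0$. The summation range is $i \in \{\lfloor k/16\rfloor, \dots, \lfloor k/4\rfloor\}$, and for $i$ in this range we have $(i+1)/k \le (\lfloor k/4\rfloor + 1)/k \le 1/4 + 1/k$, so $2\sqrt{(i+1)/k} \le 2\sqrt{1/4 + 1/k} = \sqrt{1 + 4/k}$. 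For $k \ge 16$ this is at most $\sqrt{1 + 1/4} < 1.12$, which is \emph{bigger} than $1$, so naively the $(\cdot)_+$ truncation can kill the denominator. So the first thing I would check carefully is whether the positive part actually vanishes on part of the range — and indeed it does when $(i+1)/k \ge 1/4$, i.e. near the top of the range. In that regime the denominator is just $16/(k+1)$, giving a prefactor of order $(i+1)(k+1)/16 = O(k^2)$.

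So the proof splits into two regimes. Regime (a): $i$ such that $2\sqrt{(i+1)/k} \le 1/2$, equivalently $(i+1)/k \le 1/16$, i.e. the bottom of the range near $i \approx k/16$. Here the denominator is at least $1/2$, so the prefactor is at most $2(i+1)$, and the term is at most $2(i+1)e^{-i}$. Summing $2(i+1)e^{-i}$ from $i = \lfloor k/16 \rfloor \ge 1$ onward gives something like $2 \sum_{i \ge 1}(i+1)e^{-i}$, which is a convergent constant — I would compute it explicitly and check it is comfortably below $2$ (in fact $\sum_{i\ge 1}(i+1)e^{-i} = e^{-1}/(1-e^{-1})^2 + e^{-1}/(1-e^{-1}) \approx 0.92$, so this contributes less than $1.9$; I'd want a bit more slack so I would be more careful about the lower limit of summation). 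Regime (b): the remaining $i$, where $(i+1)/k > 1/16$, so $i > k/16 - 1 \ge 0$; here $e^{-i} \le e^{-(k/16 - 1)}$, and the prefactor is at most $\frac{(i+1)(k+1)}{16} \le \frac{(k/4 + 1)(k+1)}{16} = O(k^2)$, and there are at most $k/4$ terms, so the total contribution of regime (b) is at most $O(k^3) e^{-k/16}$, which for $k \ge 16$ is a decreasing function of $k$ (the exponential dominates) — I would bound it at $k = 16$ and check it is small, say $< 0.1$.

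Carrying this out: first fix $k \ge 16$ and split the index set at the threshold $i^\star$ defined by $(i^\star + 1)/k = 1/16$. For $i \le i^\star - 1$ (so that $(i+1)/k \le 1/16$ and $2\sqrt{(i+1)/k} \le 1/2$), bound each summand by $2(i+1)e^{-i}$ and sum the tail $\sum_{i \ge \lfloor k/16 \rfloor} 2(i+1)e^{-i}$ using the closed form for $\sum_{i \ge m}(i+1)e^{-i}$; since $\lfloor k/16\rfloor \ge 1$ this is at most $2\sum_{i\ge 1}(i+1)e^{-i} < 1.9$. For $i \ge i^\star$, use $e^{-i} \le e^{-k/16 + 1}$ and prefactor $\le (k/4+1)(k+1)/16$, with at most $k/4 - k/16 + 1 \le k/4$ summands, giving a bound $\le \frac{k}{4} \cdot \frac{(k/4+1)(k+1)}{16} \cdot e^{-k/16+1}$; verify this function of $k$ is maximized at $k = 16$ over $k \ge 16$ (derivative check: polynomial growth versus $e^{-k/16}$ decay) and evaluate at $k = 16$ to see it is $< 0.1$. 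Adding the two pieces gives $f(k) < 1.9 + 0.1 = 2$. The main obstacle is bookkeeping around the boundary index $i^\star$ and the floor functions — specifically making sure the ``easy regime'' summation really starts at an index $\ge 1$ so that the geometric tail is genuinely small, and making the regime-(b) bound tight enough (one may need to be slightly more careful than the crude $O(k^3)e^{-k/16}$ estimate if the constant comes out above $2$, e.g. by not bounding all of regime (b) by its worst term but keeping the $e^{-i}$ decay inside the sum). None of this is conceptually hard; it is a finite-checking-plus-tail-bound argument, and I expect the numbers to work with room to spare once the $e^{-i}$ decay is exploited rather than discarded.
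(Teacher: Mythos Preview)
Your regime split has a fatal bookkeeping error: regime (a) is \emph{empty}. You define regime (a) by $(i+1)/k \le 1/16$, i.e.\ $i \le k/16 - 1$, but the summation starts at $i = \lfloor k/16\rfloor$, and $\lfloor k/16\rfloor > k/16 - 1$ always. So every index in the sum satisfies $(i+1)/k > 1/16$ and falls into regime (b). Your intuition that ``the bottom of the range near $i\approx k/16$'' lies in the easy regime is simply wrong---the threshold you chose coincides with the lower endpoint of the summation, not with some interior point.

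This means the entire burden is on your regime-(b) estimate, and that estimate is far too loose. Your explicit bound $\frac{k}{4}\cdot\frac{(k/4+1)(k+1)}{16}\cdot e^{-k/16+1}$ evaluates at $k=16$ to $4\cdot\tfrac{85}{16}\cdot e^0 \approx 21$, not $<0.1$. Your fallback (``keep the $e^{-i}$ decay inside the sum'') gives $\frac{k+1}{16}\sum_{i\ge \lfloor k/16\rfloor}(i+1)e^{-i}$, but this also fails: at $k=31$ it is $2\cdot\sum_{i\ge 1}(i+1)e^{-i}\approx 3.0>2$. The point is that for moderate $k$ you \emph{must} use the $(1-2\sqrt{(i+1)/k})_+$ contribution to the denominator on the leading terms; discarding it (as regime (b) does) loses too much.

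The paper does not attempt a clean analytical split. It uses the crudest bound $f(k)\le \frac{(k+1)^3}{64}e^{-k/16}$---valid because the denominator is at least $16/(k+1)$ everywhere---and observes this is below $2$ once $k>200$. For $k\in[16,200]$ it simply verifies the inequality numerically. If you want a fully analytical argument, you would need to move the regime-(a) threshold well into the interior of the range (e.g.\ $(i+1)/k \le 1/8$ or further) so that the easy regime is genuinely nonempty, and then redo the arithmetic carefully; but there is no indication the constants will cooperate without a finite check at the bottom.
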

\begin{proof}
For any $k>200$, we have
\begin{align*}
    \sum_{i=\lfloor k/16\rfloor}^{\lfloor k/4\rfloor}\frac{i+1}{(1-2\sqrt{(i+1)/k})_++16/(k+1)}\cdot e^{-i} \leq \frac{(k+1)^3}{64}e^{-k/16}< 2\,,
\end{align*}
where this statement can be verified by showing that the derivative is negative for $k\geq 200$ and numerically computing the value.
Finally to show the original claim, it is sufficient to compute the function values for $k\in[16,200]$, which are all below $2$.
\end{proof}
\section{Proof of Lemma~\ref{lem: inv upper bound}}
\begin{proof}
Let $d = \frac{16}{t}$, then
\begin{align*}
\E\left[\frac{1}{\hatmu+d}\right] &= \frac{1}{\mu+d}+\underbrace{\E\left[\frac{\mu-\hatmu}{(\mu+d)^2}\right]}_{=0}+\E\left[\frac{(\mu-\hatmu)^2}{(\hatmu+d)(\mu+d)^2}\right]\,.
\end{align*}
To bound the quadratic term,
let $k = \lfloor \mu /(\frac{1}{t})\rfloor$ (we can assume $k\geq 16$ or the Lemma holds by $d\geq\mu$), then for any $i\in[0,k]$ by  Lemma~\ref{lem: bernstein} using $\zeta = \sqrt{i\frac{1}{\mu t}}$, we have
\begin{align*}
    \bbP\left(\hatmu \leq \mu\left(1 - 2\sqrt{\frac{i}{k}}\right)\right)\leq \bbP\left(\hatmu \leq \mu - 2\sqrt{\frac{i\mu }{t}}\right) \leq e^{-i}\,.
\end{align*}
Decomposing the quadratic terms yields
\begin{align*}
    \E\left[\frac{(\mu-\hatmu)^2}{(\hatmu+d)(\mu+d)^2}\right] &\leq \E\left[\mathbb{I}\left(\hatmu\geq \mu/2\right)\frac{4(\mu-\hatmu)^2}{\mu(\mu+d)^2}\right]\\
    &\qquad+\sum_{i=\lfloor k/16\rfloor}^{\lfloor k/4\rfloor}\E\left[\mathbb{I}\left(1 - 2\sqrt{\frac{i}{k}}\leq\frac{\hatmu}{\mu}\leq 1 - 2\sqrt{\frac{i+1}{k}}\right)\frac{(\mu-\hatmu)^2}{(\hatmu+d)(\mu+d)^2}\right] \\
    &\leq \E\left[\frac{4(\mu-\hatmu)^2}{\mu(\mu+d)^2}\right]+\frac{4}{k(\mu+d)^2}\sum_{i=\lfloor k/16\rfloor}^{\lfloor k/4\rfloor}\frac{i+1}{(1-2\sqrt{(i+1)/k})_++16/(k+1)}\cdot e^{-i}\\
    &\leq \frac{4\mu}{(\mu+d)^2t}+\frac{8}{k(\mu+d)^2}\leq \frac{16\mu}{(\mu+d)^2t}\tag{Lemma~\ref{lem: technical}}\,.
\end{align*}
Combining everything
\begin{align*}
    \E\left[\frac{1}{\hatmu+d}\right] &\leq \frac{1}{\mu+d}+\frac{16\mu}{(\mu+d)^2t}= \frac{1}{\mu} - \frac{d}{\mu(\mu+d)}+\frac{16\mu/t}{(\mu+d)^2} \leq \frac{1}{\mu}\tag{$d=16\mu/t$}\,.
\end{align*}
\end{proof}

\section{Detailed proof of \pref{thm:main}}\label{sec:fullproof}

\subsection{High probability events}

We begin our proof by establishing two sequences of events (one per epoch $e$) that will hold with high probability, representing that our estimation of context frequencies and losses both ``concentrate'' in an appropriate sense. 

The first we event we define $F_{e}$, represents the event that our estimator $\hatf_{ek}$ diverges greatly from its expectation $f_{ek}$. 

\begin{definition}
Let $F_e$ be the indicator of the event such that at episode $e$ the following inequality holds for all $k\in[K]$
\begin{align*}
    |\hatf_{ek} -f_{ek}|\leq  2\max\left\{\sqrt{\frac{f_{ek}\iota}{L}},\frac{\iota}{L}\right\}\,.
\end{align*}
\end{definition}

The second event we define, $L_e$,  represents the event that the average ``proxy'' loss $\widetilde\ell$ is much larger than $1$. Note that since $\E[\widetilde\ell_{tck}] = \frac{f_{ek}}{f_{ek} + \gamma}\ell_{tck} \leq 1$, we expect this not to be the case. 

\begin{definition}
$L_e$ is the event such that 
\[
\max_{c\in[C],k\in[K]}\sum_{t\in\calT_e}\tildeell_{tck} \leq L + \frac{\iota}{\gamma}\,.
\]
\end{definition}

We present the two concentration arguments below in \pref{lem: conc frequency} and \pref{lem: conc losses}. 

\begin{lemma}
\label{lem: conc frequency}
For any $e \in [T/L]$, $e>1$, event $F_{e}$ holds with probability at least $1 - 2K\exp(-\iota)$.
\end{lemma}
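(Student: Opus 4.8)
\textbf{Proof plan for Lemma~\ref{lem: conc frequency}.} The goal is to show that $\hatf_{ek} = \frac{1}{|\cT_{e-1}^f|}\sum_{t\in\cT_{e-1}^f} \frac{s_{e,c_t,k}}{2}$ concentrates around its conditional mean $f_{ek} = \E_{c\sim\nu}[s_{eck}/2]$, uniformly over all $k\in[K]$. The key structural fact I would exploit is that, conditioned on the snapshot $s_e$ (which is fixed at the end of epoch $e-2$), the summands $\frac{s_{e,c_t,k}}{2}$ for $t\in\cT_{e-1}^f$ are i.i.d.\ functions of the fresh contexts $c_t$, taking values in $[0,1/2]$, with mean exactly $f_{ek}$. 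Crucially, $\cT_{e-1}^f$ consists of $L/2$ rounds from epoch $e-1$ whose contexts were \emph{not} used to compute $s_e$ (since $s_e$ was determined at the end of epoch $e-2$, before epoch $e-1$ even began), so there is genuine independence here and no need for a union bound over $\calC$.

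First I would condition on $s_e$ and apply a Bernstein-type concentration inequality (e.g.\ \pref{lem: bernstein}, or a standard Bernstein bound) to the i.i.d.\ sum $\sum_{t\in\cT_{e-1}^f}\frac{s_{e,c_t,k}}{2}$ for a fixed arm $k$. Writing $n = |\cT_{e-1}^f| = L/2$, the per-term variance is at most $\E[(s_{eck}/2)^2] \le \frac12 \E[s_{eck}/2] = f_{ek}/2$ (using that $s_{eck}/2\in[0,1/2]$), and the terms are bounded by $1/2$. A two-sided Bernstein bound then yields that with probability at least $1 - 2\exp(-\iota)$ (taking the deviation parameter to match $\iota$),
\[
|\hatf_{ek} - f_{ek}| \lesssim \sqrt{\frac{f_{ek}\,\iota}{n}} + \frac{\iota}{n}\,,
\]
and substituting $n = L/2$ and absorbing the resulting constant factor into the stated bound $2\max\{\sqrt{f_{ek}\iota/L},\ \iota/L\}$ (which dominates both terms since $a+b \le 2\max\{a,b\}$ and $n=L/2$ only changes constants). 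Finally I would take a union bound over the $K$ arms, giving failure probability at most $2K\exp(-\iota)$, which is exactly the claimed bound.

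The main thing to be careful about is the bookkeeping rather than any deep obstacle: one must verify that the contexts indexing $\cT_{e-1}^f$ are truly independent of $s_e$ (so that after conditioning on $s_e$ the summands are i.i.d.), and that the constants produced by Bernstein plus the $n = L/2$ versus $n = L$ discrepancy can indeed be folded into the constant $2$ in front of the $\max$. The hypothesis $e > 1$ is needed so that $\cT_{e-1}$ exists and $s_e = p_{(e-2)L}$ is well-defined (with $s_2$ handled by the initial uniform snapshot). No subtlety arises from correlation with the algorithm's play, precisely because the decoupling construction of the algorithm guarantees $s_e$ is frozen two epochs in advance.
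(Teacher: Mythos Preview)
Your proposal is correct and follows essentially the same approach as the paper: condition on $s_e$, observe that the $L/2$ summands $s_{e,c_t,k}/2$ are i.i.d.\ in $[0,1/2]$ with mean $f_{ek}$ and variance at most $f_{ek}/2$, apply a (one-sided, then symmetrized) Bernstein inequality, and union-bound over $k\in[K]$. The only cosmetic difference is that the paper explicitly invokes \pref{lem: bernstein} and tunes the parameter $\zeta=\min\{2,\,2\sqrt{\iota/(f_{ek}L)}\}$ to obtain the constant $2$ in front of the $\max$, whereas you quote the standard Bernstein form $\sqrt{f_{ek}\iota/n}+\iota/n$ and absorb constants; both routes yield the stated bound.
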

\begin{proof}
Fix a $k \in [K]$. Consider a random variable $X$ defined via $X = s_{e, c, k}$ where $c \sim \nu$ (so $\E[X] = f_{ek}$). Note that $\hatf_{ek}$ is distributed according to $\sum_{i=1}^{L/2}\frac{X_i}{L/2}$, where the $X_i$ are i.i.d. copies of $X$. Then $\sum_{i=1}^{L/2}\E[(X_i-f_{ek})^2]\leq \frac{L}{2}\cdot \frac{f_{ek}}{2}$, since $(X_i-f_{ek})\in[-1/2,1/2]$.

Now, for any $\zeta\leq 2$, we have by \pref{lem: bernstein} with probability at least $1-\exp(-\iota)$
\begin{align*}
    \sum_{i=1}^{L/2} \frac{X_i}{L/2} - f_{ek} < \frac{\zeta f_{ek}}{2}+\frac{2\iota}{\zeta L}\,. 
\end{align*}
Set $\zeta = \min\left\{2,2\sqrt{\frac{\iota}{f_{ek}L}}\right\}$, which shows that with probability at least $1-\exp(-\iota)$
\begin{align*}
    \hatf_{ek}-f_{ek} \leq 2\max\left\{\sqrt{\frac{f_{ek}\iota}{L}},\frac{\iota}{L}\right\}\,.
\end{align*}
Repeating the same argument for $\sum_i-X_i$ and taking a union bound completes the proof.
\end{proof}

\begin{lemma}
\label{lem: conc losses}
For any $e\in[T/L]$, $e>1$, event $L_e$ holds with probability at least $1-K\exp(-\iota)$\,. 
\end{lemma}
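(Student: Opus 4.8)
\textbf{Proof proposal for Lemma~\ref{lem: conc losses}.}

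The plan is to fix a context $c \in [C]$ and an arm $k \in [K]$, control the sum $\sum_{t \in \cT_e} \tildeell_{tck}$ via a Bernstein-type argument, and then union bound over the $CK$ choices -- but with a crucial twist, since $C$ may be huge. So the actual plan is to first reduce to a union bound over only $K$ events (one per arm), by observing that the quantity we need to control does not really depend on $c$ as finely as it appears. Recall $\tildeell_{tck} = \frac{2\ell_{tck}}{f_{ek}+\gamma}\mathbb{I}(A_t = k \land S_t \land t \in \cT_e^\ell)$. The key point is that the indicator event $\{A_t = k \land S_t \land t \in \cT_e^\ell\}$ has (conditional) probability exactly $f_{ek}/2 \cdot$ (something independent of how we got here), wait -- more precisely, conditioned on $s_e$ and on $t \in \cT_e^\ell$, the probability of $\{A_t = k \land S_t = 1\}$ given $c_t = c$ is $q_{tck} \cdot \frac{s_{eck}/2}{q_{tck}} = s_{eck}/2$, so marginalizing over $c_t \sim \nu$ the observation probability is $f_{ek}$. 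Thus $\Ee[\tildeell_{tck} \mid c_t = c] = \frac{2\ell_{tck}}{f_{ek}+\gamma} \cdot \frac{s_{eck}}{2} \le \frac{f_{ek}}{f_{ek}+\gamma} \le 1$ after further marginalizing -- but for fixed $c$ the per-round contribution is $\frac{s_{eck}}{f_{ek}+\gamma}\ell_{tck} \le \frac{s_{eck}}{f_{ek}+\gamma}$, which can exceed $1$ if $s_{eck}$ is large relative to $f_{ek}$. The resolution: the worst case over $c$ is when $s_{eck} = 1$, i.e. the context deterministically plays arm $k$ under the snapshot; then the per-round summand of $\tildeell_{tck}$ is at most $\frac{2}{f_{ek}+\gamma}\mathbb{I}(\text{observe})$ and the observation indicator has probability $\le s_{eck}/2 \le 1/2$ per round it is eligible. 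So for the union bound it suffices to consider, for each $k$, the ``hardest'' context, and this removes the $C$ dependence.

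Concretely, the steps I would carry out: (1) Fix $k$. Work conditionally on $s_e$ so that $f_{ek}$ and the $\beta_{ek}$ are deterministic. (2) Define the filtration $\calF_t$ generated by everything up to and including round $t$ (contexts, actions, subsampling coins). For $t \in \cT_e^\ell$, the random variable $\tildeell_{tck}$ is $\calF_t$-measurable, and $\Ee[\tildeell_{tck} \mid \calF_{t-1}]$ we must compute -- note $\tildep, p$ etc.\ are not used here, only the observation structure. We get $\mu_t := \Ee[\tildeell_{tck}\mid \calF_{t-1}] = \frac{2\ell_{tck}}{f_{ek}+\gamma} \cdot \Pr[A_t = k \land S_t \mid \calF_{t-1}, t \in \cT_e^\ell]$. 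The observation probability, averaged over the fresh context $c_t \sim \nu$ drawn in round $t$ (independent of $\calF_{t-1}$ since contexts are i.i.d.), equals $f_{ek}$. Hence $\sum_{t \in \cT_e^\ell}\mu_t \le \frac{f_{ek}}{f_{ek}+\gamma} \cdot |\cT_e^\ell| \le |\cT_e^\ell| = L/2 \le L$. Wait -- but the statement bounds $\sum_{t\in\cT_e}$, which over the nonzero terms is $\sum_{t \in \cT_e^\ell}$, and $|\cT_e^\ell| = L/2$, so the mean is $\le L/2$. (3) Apply \pref{lem: bernstein} with $X_t = \tildeell_{tck}$: we need $\zeta(X_t - \mu_t) \le 1$ a.s.; since $X_t \le \frac{2}{f_{ek}+\gamma} \le \frac{2}{\gamma}$, picking $\zeta = \gamma/2$ works. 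Then $\zeta \sum_t \Ee[(X_t-\mu_t)^2 \mid \calF_{t-1}] \le \frac{\gamma}{2}\sum_t \Ee[X_t^2] \le \frac{\gamma}{2} \cdot \frac{2}{\gamma}\sum_t \mu_t = \sum_t \mu_t \le L/2$, and $\frac{1}{\zeta}\log(1/\delta) = \frac{2\iota}{\gamma}$ with $\delta = e^{-\iota}$. This gives $\sum_{t\in\cT_e}\tildeell_{tck} \le L/2 + L/2 + 2\iota/\gamma \le L + \iota/\gamma$ -- I may need to be slightly more careful with constants (perhaps use $\zeta$ a bit smaller, or absorb into the $L + \iota/\gamma$ slack), but the shape is right. (4) Argue that the bound for the maximizing $c$ dominates all $c$ (since $\tildeell_{tck}$ for general $c$ is stochastically dominated by the deterministic-snapshot case after rescaling), OR, more cleanly, just run step (3) for each fixed $(c,k)$ noting $\sum_t \mu_t \le L/2$ holds for \emph{every} $c$ (the per-round mean $\frac{s_{eck}}{f_{ek}+\gamma}\ell_{tck}$ need not be $\le 1$, but its \emph{sum over the epoch}, when the context is sampled i.i.d., has expectation $\le L/2$ after averaging over which rounds have $c_t = c$) -- the subtlety is that "for fixed $c$, only rounds with $c_t = c$ contribute", and these are random. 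So really $\sum_{t\in\cT_e}\tildeell_{tck} = \sum_{t \in \cT_e^\ell, c_t = c}(\text{bounded term})$, and the natural move is Bernstein on the sequence indexed by $t \in \cT_e^\ell$ with $X_t = \tildeell_{tck}$ where now $X_t = 0$ unless $c_t = c$. Then everything above goes through verbatim with $\sum_t \mu_t \le L/2$. (5) Union bound over $k \in [K]$ only -- but wait, we do need all $c$. Here is the clean fix: for fixed $k$, the event $L_e$ restricted to arm $k$ is $\{\max_c \sum_{t\in\cT_e}\tildeell_{tck} \le L + \iota/\gamma\}$. Since $\sum_c s_{eck} \le $ (number of contexts)... no. The right statement: $\sum_{t \in \cT_e}\tildeell_{tck}$ counts, with weights $\frac{2\ell_{tck}}{f_{ek}+\gamma} \le \frac{2}{f_{ek}+\gamma}$, the rounds $t$ where we observed arm $k$ \emph{and} $c_t = c$. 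The total number of rounds we observe arm $k$ in the epoch (over all contexts) concentrates around $f_{ek}L/2$; a fortiori the count for any single $c$ is at most that. So $\max_c \sum_{t\in\cT_e}\tildeell_{tck} \le \frac{2}{f_{ek}+\gamma} \cdot (\#\{t \in \cT_e^\ell: A_t = k, S_t = 1\})$, and we Bernstein this single scalar count (no $c$!). Its conditional mean per round is $f_{ek}$, sum $\le f_{ek}L/2$, and $\frac{2}{f_{ek}+\gamma}\cdot f_{ek}L/2 \le L$; the deviation term from Bernstein contributes the $\iota/\gamma$. Union bound over $k \in [K]$ gives failure probability $\le K e^{-\iota}$, matching the claim.

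The main obstacle, and the thing I would be most careful about, is exactly this $C$-independence: a naive union bound over contexts would cost $\log C$, destroying the point of the paper. The resolution above -- bounding $\max_c \sum_{t\in\cT_e}\tildeell_{tck}$ by the $c$-independent quantity $\frac{2}{f_{ek}+\gamma}$ times the total observation count of arm $k$ in the epoch -- is the key trick, and it works because $\tildeell$ uses the \emph{true} $f_{ek}$ (a deterministic function of $s_e$) in its denominator rather than the empirical $\hatf_{ek}$, so the rescaling is uniform in $c$. After that reduction, it is a one-line Bernstein bound on a scalar martingale-difference sum. A secondary point to get right is the conditioning: all of this is conditional on $s_e$ (equivalently on the history through epoch $e-2$), and I should check that the contexts $c_t$ for $t \in \cT_e$ are genuinely independent of $\calF_{t-1}$ and of $s_e$, which holds since contexts are drawn i.i.d.\ and $s_e = p_{(e-2)L}$ depends only on rounds before epoch $e-1$. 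Finally I should double-check the constant bookkeeping so the final bound is genuinely $\le L + \iota/\gamma$ and not $L + 2\iota/\gamma$; if needed one shrinks $\zeta$ or notes $f_{ek}/(f_{ek}+\gamma) \le 1 - \Omega(\gamma)$ when $f_{ek}$ is small to buy back the slack, but since the $\gamma \ge 16\iota/L$ assumption makes $\iota/\gamma \le L/16$, there is ample room.
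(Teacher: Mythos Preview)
Your final reduction in step (5) is exactly the paper's first move: using $\ell_{tck}\le 1$ to bound $\max_c\sum_{t\in\cT_e}\tildeell_{tck}$ by the context-free quantity $\frac{2}{f_{ek}+\gamma}\sum_{t\in\cT_e}\ind{A_t=k,\,S_t=1,\,t\in\cT_e^\ell}$, after which only a union bound over $k$ is needed. So the architecture of your proof coincides with the paper's.

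The difference is the concentration tool, and here your proposal falls short of the stated bound. You invoke Bernstein (\pref{lem: bernstein}) with $\zeta=\gamma/2$; done carefully this yields $\sum_t X_t\le L+2\iota/\gamma$, not $L+\iota/\gamma$. Two arithmetic slips obscure this in your write-up: you drop the factor $2$ in the definition of $\tildeell$, so $\sum_t\mu_t\le \frac{f_{ek}L}{f_{ek}+\gamma}\le L$ rather than $L/2$; and ``$L/2+L/2+2\iota/\gamma\le L+\iota/\gamma$'' is simply false. No choice of $\zeta$ in Bernstein recovers the missing factor of $2$ on the $\iota/\gamma$ term. The paper instead applies the IX-type inequality \pref{lem: ix sum} with $\widetilde Y_{tk}=Z_{tk}=\frac{2}{f_{ek}}\ind{A_t=k,\,S_t=1,\,t\in\cT_e^\ell}$, $\alpha_{tk}=\gamma$, $\lambda_{tk}=\gamma/f_{ek}$, which gives $\sum_t Z_{tk}/(1+\lambda_{tk})\le L+\iota/\gamma$ on the nose. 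Since the event $L_e$ is defined with exactly this constant, swapping Bernstein for \pref{lem: ix sum} is the fix; with that substitution your proof is the paper's proof.
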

\begin{proof}
We have
\begin{align*}
    \max_{c\in[C]}\sum_{t\in T_e}\tildeell_{tck}\leq \sum_{t\in \calT_e} \frac{2\ind{A_t=k,S_t=1,t \in \calT_{e}^{\ell}}/f_{ek}}{1+\gamma/f_{ek}}\,.
\end{align*}
The random variable $Z_{tk}=\frac{2\ind{A_t=k,S_t=1,t\in\calT_{e}^{\ell}}}{f_{ek}}$ satisfies the conditions of \pref{lem: ix sum} and $E[Z_{tk}\,|\,\calF_{t-1}]=1$.
Setting $\alpha_{tk}=\gamma$ and $\lambda_{tk}=\frac{\gamma}{f_{ek}}$, \pref{lem: ix sum} implies that with probability $1-\exp(-\iota)$, we have
\begin{align*}
    \sum_{t\in\calT_e}\tildeell_t \leq L +\frac{\iota}{\gamma}\,.
\end{align*}
Taking a union bound over $k\in[K]$ completes the proof.
\end{proof}

We will want to condition on the event that all the events $F_{e}$ and $L_{e}$ hold. To do so, we introduce a combined indicator variable $G$. 

\begin{definition}
We define the indicator that all concentrations $F_{e}$ and $L_{e}$ hold by 
\[\gev = \Pi_{e=1}^{T/L} F_{e}L_{e}\,.\]
\end{definition}

Note that by \pref{lem: conc frequency} and \pref{lem: conc losses}, the event $\gev$ occurs with probability at least $1 - 3K(T/L)\exp(-\iota)$. Since we will eventually take $\iota \geq 2\log (KT)$, the probability that $\gev = 0$ will be negligible.

\subsection{Implications of $G$}

We now explore some of the implications of conditioning on all of our concentration bounds holding. We start by showing that this allows us to bound the range of $p_{tck}$ and $\tildep_{tck}$, and as a consequence, show that $q_{t} = p_{t}$ for all rounds $t$. To do so, it will be helpful to first use our concentration of $\hatf$ (i.e., the event $F_e$) to bound the range of $\beta_{ek} = (f_{ek} + \gamma)/(\hatf_{ek} + \frac{3}{2}\gamma)$.

\begin{lemma}
\label{lem: beta conc}
Let $\gamma \geq \frac{4\iota}{L}$, then
under event $G$, we have that
\begin{align*}
    \frac{1}{2}\leq\beta_{ek}\leq 2
\end{align*}
and
\begin{align*}
    |1-\beta_{ek}| \leq 3\sqrt{\frac{\iota}{f_{ek}L}}+\frac{\gamma\sqrt{L}}{4\sqrt{\iota f_{ek}}}
\end{align*}
for all $t\in \mathcal{T}_e, k\in[K]$ simultaneously.
\end{lemma}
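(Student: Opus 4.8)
The plan is to start from the definition $\beta_{ek} = (f_{ek}+\gamma)/(\hatf_{ek}+\tfrac32\gamma)$ and control the denominator using the event $F_e$, which guarantees $|\hatf_{ek}-f_{ek}| \le 2\max\{\sqrt{f_{ek}\iota/L},\,\iota/L\}$. First I would establish the two-sided bound $\tfrac12 \le \beta_{ek}\le 2$. For the upper bound, it suffices to show $\hatf_{ek}+\tfrac32\gamma \ge \tfrac12(f_{ek}+\gamma)$, i.e.\ $\hatf_{ek}\ge \tfrac12 f_{ek}-\gamma$; since $F_e$ gives $\hatf_{ek}\ge f_{ek}-2\max\{\sqrt{f_{ek}\iota/L},\iota/L\}$, I need $2\max\{\sqrt{f_{ek}\iota/L},\iota/L\}\le \tfrac12 f_{ek}+\gamma$. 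When the max is the second term this follows from $\gamma \ge 4\iota/L$ (actually $2\iota/L\le \gamma$ suffices). When the max is $\sqrt{f_{ek}\iota/L}$, one splits on whether $f_{ek}\ge 16\iota/L$ or not: in the former case $2\sqrt{f_{ek}\iota/L}\le \tfrac12 f_{ek}$, in the latter $2\sqrt{f_{ek}\iota/L}\le 2\sqrt{(16\iota/L)\iota/L}\cdot\sqrt{f_{ek}/(16\iota/L)}\le \ldots$, or more simply $2\sqrt{f_{ek}\iota/L}\le 2\sqrt{f_{ek}}\sqrt{\gamma/4}\cdot\sqrt{\text{stuff}}$ — here it is cleanest to just use $2\sqrt{ab}\le a/2 + 2b$ with $a = f_{ek}$, $b=\iota/L$, giving $2\sqrt{f_{ek}\iota/L}\le \tfrac12 f_{ek}+2\iota/L\le \tfrac12 f_{ek}+\gamma$. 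That single AM-GM step handles both cases at once. For the lower bound $\beta_{ek}\ge\tfrac12$, I need $\hatf_{ek}+\tfrac32\gamma\le 2(f_{ek}+\gamma)$, i.e.\ $\hatf_{ek}\le 2f_{ek}+\tfrac12\gamma$; from $F_e$ this needs $2\max\{\sqrt{f_{ek}\iota/L},\iota/L\}\le f_{ek}+\tfrac12\gamma$, which again follows from the same AM-GM inequality and $\iota/L\le\gamma/4$.

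Next I would prove the refined estimate on $|1-\beta_{ek}|$. Write
\[
1-\beta_{ek} = \frac{(\hatf_{ek}+\tfrac32\gamma)-(f_{ek}+\gamma)}{\hatf_{ek}+\tfrac32\gamma} = \frac{(\hatf_{ek}-f_{ek})+\tfrac12\gamma}{\hatf_{ek}+\tfrac32\gamma}.
\]
The denominator is at least $\tfrac12(f_{ek}+\gamma)\ge\tfrac12 f_{ek}$ by the first part, and the numerator is at most $|\hatf_{ek}-f_{ek}|+\tfrac12\gamma \le 2\max\{\sqrt{f_{ek}\iota/L},\iota/L\}+\tfrac12\gamma \le 2\sqrt{f_{ek}\iota/L}+2\iota/L+\tfrac12\gamma$ (bounding the max by the sum). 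Dividing, $|1-\beta_{ek}|\le \frac{2\sqrt{f_{ek}\iota/L}+2\iota/L+\gamma/2}{f_{ek}/2} = 4\sqrt{\iota/(f_{ek}L)}+4\iota/(f_{ek}L)+\gamma/f_{ek}$. This is close to the claimed bound but not literally it; to land exactly on $3\sqrt{\iota/(f_{ek}L)}+\gamma\sqrt{L}/(4\sqrt{\iota f_{ek}})$ I expect one should be more careful and instead bound the denominator by $f_{ek}+\gamma$ (not just $f_{ek}$) where possible, and observe that $\iota/(f_{ek}L)$ terms can be re-absorbed: e.g.\ $4\iota/(f_{ek}L) = \big(4\sqrt{\iota/L}/\sqrt{f_{ek}}\big)\cdot\big(\sqrt{\iota/L}/\sqrt{f_{ek}}\big)$, and using $\sqrt{\iota/L}\le\sqrt{\gamma/4}$... so this term is of lower order and gets folded into the $\sqrt{\iota/(f_{ek}L)}$ coefficient. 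Likewise $\gamma/f_{ek}$ should be rewritten: since we only care up to constants in the final theorem, the exact constants $3$ and $1/4$ are not essential, but to match the statement I'd note $\gamma/f_{ek} = (\gamma\sqrt{L}/\sqrt{\iota f_{ek}})\cdot(\sqrt{\iota}/\sqrt{f_{ek}L})\le (\gamma\sqrt L/(4\sqrt{\iota f_{ek}}))\cdot\text{const}$ using $\gamma\ge 4\iota/L$ again — actually the cleanest route is to keep the denominator as $\hatf_{ek}+\tfrac32\gamma\ge \tfrac12 f_{ek}+\tfrac34\gamma$ and split the numerator term $\tfrac12\gamma$ against the $\tfrac34\gamma$ in the denominator to directly produce the $\gamma\sqrt L/(\sqrt{\iota f_{ek}})$ shape after invoking $\gamma L/\iota \ge 16$.

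The main obstacle I anticipate is purely bookkeeping of constants: every inequality here is an elementary consequence of AM-GM ($2\sqrt{ab}\le \epsilon a + \epsilon^{-1}b$) plus the two hypotheses $\gamma\ge 4\iota/L$ and $\iota\ge\log(8K/\gamma)$ (the latter is not even needed for this lemma — only $\gamma\ge 4\iota/L$ and event $F_e$ matter), but getting the specific coefficients $3$ and $1/4$ requires choosing the split of the $\gamma$ numerator term and the $\iota/L$ slack carefully rather than crudely bounding a max by a sum. I would handle this by being slightly less wasteful: bound $|\hatf_{ek}-f_{ek}|\le 2\sqrt{f_{ek}\iota/L}$ when $f_{ek}\ge \iota/L$ and $|\hatf_{ek}-f_{ek}|\le 2\iota/L$ otherwise, treat the two regimes separately, and in each regime verify the stated inequality directly. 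There is no conceptual difficulty — the only content is that $F_e$ plus a Bernstein-type multiplicative/additive deviation, combined with the $\tfrac32\gamma$ vs.\ $\gamma$ padding gap in numerator and denominator and the lower bound $\gamma L\ge 4\iota$, keeps $\beta_{ek}$ within a factor $2$ of $1$ and controls the deviation at the claimed rate.
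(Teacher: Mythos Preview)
Your proposal is correct and takes essentially the same approach as the paper: AM--GM ($2\sqrt{f_{ek}\iota/L}\le \tfrac12 f_{ek}+2\iota/L$) together with $\gamma\ge 4\iota/L$ for part one, then the case split $f_{ek}\gtrless\iota/L$ for part two (the small-$f_{ek}$ case reducing to part one since then $3\sqrt{\iota/(f_{ek}L)}\ge 3>1\ge|1-\beta_{ek}|$). The only cosmetic difference is that in the large-$f_{ek}$ case the paper bounds $1-\beta_{ek}$ and $\beta_{ek}-1$ separately and, for the former, plugs in the \emph{upper} bound on $\hatf_{ek}$ so that the denominator picks up a $2\sqrt{f_{ek}\iota/L}$ term against which the numerator's $\gamma/2$ is divided to yield $\gamma\sqrt L/(4\sqrt{\iota f_{ek}})$ directly; your symmetric route of lower-bounding the denominator by $(f_{ek}+\gamma)/2$ from part one and then using $\gamma/f_{ek}=\bigl(\gamma\sqrt L/\sqrt{\iota f_{ek}}\bigr)\cdot\sqrt{\iota/(f_{ek}L)}\le \gamma\sqrt L/\sqrt{\iota f_{ek}}$ (valid once $f_{ek}\ge\iota/L$) gives the same shape with slightly looser constants, which is immaterial downstream.
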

\begin{proof}
The first statement is equivalent to showing that $\frac{1}{\beta_{ek}}-1 \in [-\frac{1}{2},1]$.
Using the facts that $2\max\left\{\sqrt{\frac{f_{ek}\iota}{L}},\frac{2\iota}{L}\right\}\leq \frac{f}{2}+\frac{2\iota}{L}$, $F_e = 1$ and $\gamma\geq \frac{4\iota}{L}$, we have
\begin{align*}
    &\frac{1}{\beta_{ek}}-1 = \frac{\hatf_{ek}-f_{ek}+\frac{1}{2}\gamma}{f+\gamma}\geq \frac{-\frac{1}{2}f-\frac{2\iota}{L}+\frac{1}{2}\gamma}{f+\gamma}\geq -\frac{1}{2}\\
    &\frac{1}{\beta_{ek}}-1 = \frac{\hatf_{ek}-f_{ek}+\frac{1}{2}\gamma}{f+\gamma}\leq \frac{f+\frac{2\iota}{L}+\frac{1}{2}\gamma}{f+\gamma}\leq 1\,,
\end{align*}
which proves that $\beta_{ek}\in[1/2,2]$.
If $f_{ek}\leq \frac{\iota}{L}$, then the second condition follows directly from the first. Otherwise we have
\begin{align*}
    &\beta_{ek}-1 \leq \frac{ 2\sqrt{\frac{f_{ek}\iota}{L}}-\frac{1}{2}\gamma}{f_{ek}-2\sqrt{\frac{f_{ek}\iota}{L}}+\frac{3}{2}\gamma} \leq \frac{ 2\sqrt{\frac{f_{ek}\iota}{L}}}{f_{ek}-2\sqrt{\frac{f_{ek}\iota}{L}}+2\gamma}\leq \frac{ 4\sqrt{\frac{f_{ek}\iota}{L}}}{\frac{7}{8}f_{ek}}\leq 3\sqrt{\frac{\iota}{f_{ek}L}}\\
    &1-\beta_{ek} \leq \frac{ 2\sqrt{\frac{f_{ek}\iota}{L}}+\frac{1}{2}\gamma}{f_{ek}+2\sqrt{\frac{f_{ek}\iota}{L}}+\frac{3}{2}\gamma} \leq \sqrt{\frac{\iota}{f_{ek}L}}+\frac{\gamma\sqrt{L}}{4\sqrt{\iota f_{ek}}}\,.
\end{align*}
\end{proof}

We now apply \pref{lem: beta conc} to bound the range of $x$ and $\widetilde x$. 

\begin{lemma}
\label{lem: x ratio}
If $\gamma \geq \frac{4\iota}{L}$ and $\eta\leq  \frac{\log(2)}{5L}$, then under event $\gev$, we have for all $t\in \calT_e,k\in[K],c\in[C]$ simultaneously
\begin{align*}
    2s_{eck}\geq p_{tck}\geq s_{eck}/2\qquad\text{and}\qquad
    2s_{eck}\geq \tildep_{tck}\geq s_{eck}/2\,.
\end{align*}
This implies that
\begin{align*}
    \E_{c \sim \nu}[p_{tck}] \leq 4 f_{ek}\qquad\text{and}\qquad
    \E_{c \sim \nu}[\tildep_{tck}] \leq 4 f_{ek}\,.
\end{align*}
In addition, this implies that $q_{t} = p_{t}$ for all $t \in \calT_{e}$. 
\end{lemma}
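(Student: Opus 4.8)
The plan is to derive the whole lemma from one quantitative fact: for every $t\in\calT_e$ the FTRL iterate $p_{t,c}$ used inside epoch $e$ is within a coordinatewise factor $2$ of the snapshot $s_{e,c}$, uniformly in $c\in[C]$ (and similarly for $\tildep_{t,c}$). Since the entropy-regularized FTRL iterate is exactly the multiplicative-weights distribution, and the snapshot $s_e$ — which equals an FTRL iterate $p_{t_0}$ at some round $t_0$ near the end of epoch $e-2$, with the same learning rate $\eta$ — is itself such an iterate, we may write, for all $t\in\calT_e$,
\[
p_{t,c,k}\;=\;\frac{s_{e,c,k}\,\exp(-\eta R_{c,k})}{\sum_{j} s_{e,c,j}\,\exp(-\eta R_{c,j})},\qquad R_{c,k}:=\sum_{t_0\le s\le t-1}\hatell_{s,k}(c)\;\ge\;0,
\]
i.e.\ $p_{t,c}\propto s_{e,c}\circ\exp(-\eta R_c)$, where $R_c$ is precisely the loss-estimate mass accumulated between the snapshot and round $t$. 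The first step is the elementary observation that, since $R_{c,k}\ge 0$ and $\sum_j s_{e,c,j}=1$, the denominator lies in $[e^{-\eta\norm{R_c}_\infty},\,1]$ while the numerator lies in $[s_{e,c,k}e^{-\eta\norm{R_c}_\infty},\,s_{e,c,k}]$, so $p_{t,c,k}\in[\,s_{e,c,k}e^{-\eta\norm{R_c}_\infty},\,s_{e,c,k}e^{\eta\norm{R_c}_\infty}\,]$. Thus the lemma reduces to showing $\eta\norm{R_c}_\infty\le\log 2$ on the event $\gev$.

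The second step bounds $\norm{R_c}_\infty$ using $\gev$. Because the snapshot is exactly two epochs old, the round range $[t_0,t-1]$ meets only three consecutive epochs: a single leftover pair of $\calT_{e-2}$, all of $\calT_{e-1}$, and a prefix of $\calT_e$ (for $e\le 3$ the snapshot is the uniform distribution and the range is even shorter, so everything below holds a fortiori). On $\gev$, \pref{lem: beta conc} gives $\beta_{e'k}\le 2$ for every epoch $e'$, hence $\hatell_{s,k}(c)=\beta_{e'k}\tildeell_{s,k}(c)\le 2\tildeell_{s,k}(c)$ for $s\in\calT_{e'}$, and the event $L_{e'}$ (a factor of $\gev$) gives $\sum_{s\in\calT_{e'}}\tildeell_{s,k}(c)\le L+\iota/\gamma$; so each of the two full epochs contributes at most $2(L+\iota/\gamma)$ to $R_{c,k}$, while the leftover pair of $\calT_{e-2}$ contributes at most $\tfrac{2}{(3/2)\gamma}=\tfrac{4}{3\gamma}$, which is negligible by $\gamma\ge 4\iota/L$ and $\iota\ge\log(8K/\gamma)\ge 2$. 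Summing these contributions and using $\gamma\ge 4\iota/L$ (so $\iota/\gamma\le L/4$) together with $\eta\le\frac{\log 2}{5L}$ gives $\eta\norm{R_c}_\infty\le\log 2$. The bound for $\tildep_{t,c}$ is the same: from its characterization $\tildep_{t,c}\propto s_{e+1,c}\circ\exp(-\eta\sum_{t'\in\calT_e,t'<t}\tildeell_{t',c})$ and the fact that $s_{e+1,c}$ differs from $s_{e,c}$ only by the loss estimates of $\calT_{e-1}$ (plus one leftover pair of $\calT_{e-2}$), one gets $\tildep_{t,c}\propto s_{e,c}\circ\exp(-\eta R_c')$ with $\norm{R_c'}_\infty$ bounded exactly as above — in fact a bit smaller, since inside epoch $e$ the proxy consumes $\tildeell$ directly and so loses no factor of $2$ there.

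The third step assembles the three claims. With $\eta\norm{R_c}_\infty\le\log 2$ (and the analogous bound for $R_c'$), we get $s_{e,c,k}/2\le p_{t,c,k}\le 2 s_{e,c,k}$ and $s_{e,c,k}/2\le\tildep_{t,c,k}\le 2 s_{e,c,k}$ for all $t\in\calT_e$, $k$, $c$ simultaneously. Taking $\E_{c\sim\nu}$ and using $f_{ek}=\E_{c\sim\nu}[s_{e,c,k}/2]$ yields $\E_{c\sim\nu}[p_{t,c,k}]\le 2\E_{c\sim\nu}[s_{e,c,k}]=4f_{ek}$, and likewise for $\tildep$. Applying the lower bound at the realized context $c=c_{t'}$ shows that the test ``$p_{t,c_{t'},k}\ge s_{e,c_{t'},k}/2$ for all $k$'' in the construction of $q_{t',c_{t'}}$ always succeeds, so $q_{t'}=p_t$ throughout epoch $e$. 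I expect the genuinely delicate part to be the bookkeeping in the second step: because the snapshot is two epochs old, $R_c$ stretches across nearly two and a half epochs while each epoch's loss-estimate mass is only controlled at scale $\Theta(L)$, so closing the constants under the stated hypothesis on $\eta$ requires tracking precisely which pairs carry loss estimates and verifying that the $\calT_{e-2}$ remainder is only a single pair.
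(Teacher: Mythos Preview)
Your proof is correct and follows the same approach as the paper: write $p_{t,c}\propto s_{e,c}\circ\exp(-\eta R_c)$, bound $\|R_c\|_\infty$ via the events $L_{e-1},L_e$ together with \pref{lem: beta conc} (giving $\sum\hatell\le 5L$), and then invoke $\eta\le\log(2)/(5L)$. The only difference is that the paper's convention takes the snapshot $s_e$ to be the FTRL iterate incorporating \emph{all} loss estimates through the end of epoch $e-2$, so $R_c$ ranges exactly over $\calT_{e-1}\cup\{t'\in\calT_e:t'<t\}$ with no $\calT_{e-2}$ leftover, and the bound $\eta\|R_c\|_\infty\le\log 2$ closes exactly; your more conservative reading adds a negligible $O(1/\gamma)$ term but is otherwise identical.
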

\begin{proof}
We have that $p_{tck}\propto \exp(-\eta(\sum_{t'\in\mathcal{T}_{e-1}\cup\mathcal{T}_e,t'<t}\hatell_{t'ck}))s_{eck}$, and $\tildep_{tck}\propto \exp(-\eta(\sum_{t'\in\mathcal{T}_{e-1}}\hatell_{t'ck}+\sum_{t'\in\mathcal{T}_e,t'<t}\tildeell_{t'c}))s_{eck}$.
By definition, $\gev=1$ implies that $L_{e-1}=L_e=1$. Hence for any $k\in[K],c\in[C]$,
\begin{align*}
    \sum_{t'\in\mathcal{T}_{e-1}\cup\mathcal{T}_{e}}\tildeell_{tck} \leq 2(L+\frac{\iota}{\gamma} )\leq \frac{5}{2}L\,.
\end{align*}
Furthermore, by \pref{lem: beta conc}, we can bound sums of $\hatell$ via
\begin{align*}
    &\sum_{t'\in\mathcal{T}_{e-1}\cup\mathcal{T}_{e}}\hatell_{tck} =\sum_{t'\in\mathcal{T}_{e-1}}\beta_{e-1,k}\tildeell_{tck}+\sum_{t'\in\mathcal{T}_{e}}\beta_{ek}\tildeell_{tck} \leq 5L\\
    &\sum_{t'\in\mathcal{T}_{e-1}}\hatell_{tck}+\sum_{t'\in\mathcal{T}_{e}}\tildeell_{tck} \leq 5L\,.
\end{align*}

Finally, this implies for any $k$ that
\begin{align*}
    \exp\left(-5\eta L \right)s_{ek}\leq p_{tck}\leq \exp\left(5\eta L\right) s_{ek}
\end{align*}
and for $\tildep_{tck}$ accordingly.
Using $\eta\leq \frac{\log(2)}{5L}$ completes the first part of the proof.
The second statement follows directly from the definition of $f_{ek}$. Finally, the last statement follows from the definition of $q_t$ since $q_{tc} = p_{tc}$ whenever $p_{tck} \geq s_{eck}/2$ for all $k \in [K]$. 
\end{proof}

We now bound two additional quantities. We start with proving an upper bound on $\Ee[1-\beta_{ek}]$. This is helpful as it is a quantity which naturally appears as we try to bound the $\bias_2$ term; in particular, since $\Ee[\widetilde\ell_{t,c_t,k} - \hatell_{t, c_t, k}] = \Ee[(1-\beta_{ek})\widetilde\ell_{t,c_t,k}] = \Ee[1-\beta_{ek}]\Ee[\widetilde\ell_{t, c, k}]$ (with the last inequality holding since $\beta_{ek}$ is independent from $\widetilde\ell_{t, c, k}$ conditioned on $s_{e}$). 

\begin{lemma}
\label{lem: exp fe}
If $\gamma \geq \frac{16\iota}{L}$ and $\exp(-\iota)\leq \frac{\gamma}{8K}$, then
\begin{align*}
0\leq\Ee\left[(1-\beta_{ek})F_e\right] \leq \frac{\gamma}{f_{ek}}
\end{align*}
\end{lemma}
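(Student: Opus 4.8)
\textbf{Proof plan for Lemma~\ref{lem: exp fe}.}

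The plan is to bound $\Ee[(1-\beta_{ek})F_e]$ by splitting according to the two regimes used in the definition of $F_e$, namely whether $f_{ek}$ is large or small relative to $\iota/L$, and then applying the concentration result \pref{lem: conc frequency} together with an explicit integration over the tail probabilities of $\hatf_{ek}$. The lower bound $0 \leq \Ee[(1-\beta_{ek})F_e]$ is the easy half: recall $\beta_{ek} = (f_{ek}+\gamma)/(\hatf_{ek}+\tfrac32\gamma)$, so $1-\beta_{ek} \geq 0$ exactly when $\hatf_{ek} \geq f_{ek} + \tfrac12\gamma$; on the complementary event $1-\beta_{ek}$ may be negative, so this direction requires using that $\hatmu$-type overshoots are balanced by undershoots in expectation. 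Here I expect to invoke the same trick as in \pref{lem: inv upper bound}: write $\Ee[1-\beta_{ek}]$ (dropping the $F_e$ indicator first, or handling it as a correction) as $\Ee\big[\tfrac{\hatf_{ek}-f_{ek}+\tfrac12\gamma}{\hatf_{ek}+\tfrac32\gamma}\big]$, and observe that the first-order term $\Ee[\hatf_{ek}-f_{ek}]=0$ while the shift by $\tfrac12\gamma$ is positive, so after a Taylor-type expansion the whole thing stays nonnegative — essentially \pref{lem: inv upper bound} applied with the offset $d=\tfrac32\gamma$ in place of $16/t$, using that $\gamma \geq 16\iota/L \geq 16/L \cdot (L/(L/2)) \cdot \ldots$ makes the offset large enough to dominate the empirical fluctuations. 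I need to be a little careful that we have $L/2$ i.i.d.\ samples (the set $\cT_{e-1}^f$) rather than $L$, but this only changes constants.

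For the upper bound $\Ee[(1-\beta_{ek})F_e] \leq \gamma/f_{ek}$, the first step is to reduce to the regime $f_{ek} > \iota/L$ (if $f_{ek} \leq \iota/L$ then $\gamma/f_{ek} \geq \gamma L/\iota \geq 16$, which trivially dominates since $|1-\beta_{ek}| \leq 1$ under $F_e$ by \pref{lem: beta conc}, so the bound is immediate). In the main regime, under $F_e$ we have from \pref{lem: beta conc} the bound $1-\beta_{ek} \leq \sqrt{\iota/(f_{ek}L)} + \gamma\sqrt{L}/(4\sqrt{\iota f_{ek}})$, but this deterministic bound is not small enough on its own — I need to exploit that $1-\beta_{ek}$ is only sizeable when $\hatf_{ek}$ undershoots $f_{ek}$ substantially, which is a low-probability event. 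So the key step is: write $\Ee[(1-\beta_{ek})F_e] = \Ee[(1-\beta_{ek})^+ F_e] - \Ee[(\beta_{ek}-1)^+ F_e]$, use the lower-bound argument to control the negative part, and for the positive part layer-cake / integrate: $(1-\beta_{ek})^+$ is nonzero only when $\hatf_{ek} < f_{ek}+\tfrac12\gamma$, and its size grows as $\hatf_{ek}$ shrinks. Partition the event $\{\hatf_{ek} \leq f_{ek}(1-2\sqrt{i/k})\}$ for integer $i$ (with $k = \lfloor f_{ek} L/\iota\rfloor$ or similar), exactly as in the proof of \pref{lem: inv upper bound}, bound $\bbP$ of each layer by $e^{-i}$ via \pref{lem: bernstein}, bound $1-\beta_{ek}$ on that layer by something like $O(\sqrt{i}\cdot\sqrt{\iota/(f_{ek}L)})$ (plus the $\gamma$ contribution), and sum the geometric-times-polynomial series — this is essentially \pref{lem: technical}. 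The residual mass beyond $i > k/4$ (i.e.\ where the concentration bound $F_e$ would already be violated) is cut off by the $F_e$ indicator, contributing nothing.

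The main obstacle, and where I'd spend the most care, is getting the constants to line up so that the sum of all these layer contributions is genuinely $\leq \gamma/f_{ek}$ and not merely $O(\sqrt{\iota/(f_{ek}L)})$ or $O(\gamma/\sqrt{f_{ek}\cdot(\text{something})})$. The point is that $\sqrt{\iota/(f_{ek}L)}$ is \emph{not} bounded by $\gamma/f_{ek}$ in general (that would need $\gamma \geq \sqrt{\iota f_{ek}/L}$, false when $f_{ek}$ is large); so the naive deterministic bound genuinely fails and the probabilistic cancellation is essential — the $\Ee[\hatf_{ek}] = f_{ek}$ unbiasedness must be used to kill the $\Theta(\sqrt{\iota/(f_{ek}L)})$ first-order term, leaving only a second-order $\Theta(\iota/(f_{ek}L)) + \Theta(\gamma/f_{ek})$ contribution, and then $\iota/(f_{ek}L) \leq \gamma/(16 f_{ek}) \leq \gamma/f_{ek}$ by the hypothesis $\gamma \geq 16\iota/L$. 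Concretely, I expect the clean way is to mimic the decomposition in \pref{lem: inv upper bound} verbatim: $\Ee[1-\beta_{ek}] = 1 - (f_{ek}+\gamma)\Ee[1/(\hatf_{ek}+\tfrac32\gamma)]$, expand $\Ee[1/(\hatf_{ek}+\tfrac32\gamma)]$ around $1/(f_{ek}+\tfrac32\gamma)$, get a zero first-order term and a second-order term bounded by $O(\mathrm{Var}(\hatf_{ek})/(f_{ek}+\gamma)^3) = O(f_{ek}/(L(f_{ek}+\gamma)^3))$, and combine; then separately argue that restoring the $F_e$ indicator only changes things by $O(\bbP(F_e^c) \cdot 1) = O(K e^{-\iota}) \leq \gamma/8$ by hypothesis. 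Summing: $\Ee[(1-\beta_{ek})F_e] \leq \tfrac{\gamma}{2(f_{ek}+\gamma)} + O\big(\tfrac{f_{ek}}{L(f_{ek}+\gamma)^2}\big) + \tfrac{\gamma}{8 f_{ek}} \leq \tfrac{\gamma}{f_{ek}}$ after using $f_{ek}/(L(f_{ek}+\gamma)^2) \leq 1/(Lf_{ek}) \leq \gamma/(16 f_{ek})$. The bookkeeping of which constant ($\tfrac12$, $\tfrac32$, $2$, $16$) goes where is the only real work.
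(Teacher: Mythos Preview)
Your plan correctly identifies the key cancellation mechanism (unbiasedness of $\hatf_{ek}$ kills the first-order term, leaving a second-order contribution controlled by $\gamma \geq 16\iota/L$), but the handling of the indicator $F_e$ in your ``clean way'' breaks. You claim that restoring $F_e$ changes $\Ee[1-\beta_{ek}]$ by at most $O(\bbP(F_e^c)\cdot 1)$, implicitly using $|1-\beta_{ek}|\leq 1$. But that bound comes from \pref{lem: beta conc} and holds \emph{only under $F_e$}; on $F_e^c$ the estimator $\hatf_{ek}$ may be near $0$, making $\beta_{ek} \approx (f_{ek}+\gamma)/(\tfrac32\gamma)$ and hence $|1-\beta_{ek}|$ as large as $O(f_{ek}/\gamma)$. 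The correction $|\Ee[(1-\beta_{ek})(1-F_e)]|$ is therefore only bounded by $O((f_{ek}/\gamma)\cdot Ke^{-\iota}) = O(f_{ek})$, which swamps the target $\gamma/f_{ek}$ whenever $f_{ek}$ is of constant order. Your earlier positive/negative-part decomposition cannot rescue this either, for exactly the reason you flag yourself: each part is $\Theta(\sqrt{\iota/(f_{ek}L)})$ individually, and you need their cancellation. Also, the lower bound is not the easy half: $g(y)=(y+\gamma/2)/(f_{ek}+y+\tfrac32\gamma)$ is concave, so the second-order Taylor term is \emph{negative}, and you must actually show it does not overwhelm $g(0)$.

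The paper sidesteps the $F_e$ problem by folding the indicator into the random variable from the outset: set $Y=(\hatf_{ek}-f_{ek})F_e$ and check algebraically that $(1-\beta_{ek})F_e = g(Y) - (1-F_e)\,g(0)$. Now $Y$ is deterministically bounded, $|Y|\leq C_{ek}=2\max\{\sqrt{f_{ek}\iota/L},\,\iota/L\}$, and its mean $\mu=\Ee[Y]=-\Ee[(\hatf_{ek}-f_{ek})(1-F_e)]$ satisfies $|\mu|\leq\gamma/4$. Since $g$ is concave, Jensen gives $\Ee[g(Y)]\leq g(\mu)\leq\gamma/f_{ek}$ directly. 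For the lower bound, concavity implies that among all distributions on $[-C_{ek},C_{ek}]$ with mean $\mu$, the minimum of $\Ee[g(Y)]$ is attained by the two-point distribution on $\{\pm C_{ek}\}$; an explicit computation then shows this is positive, using $\gamma\geq 16\iota/L$ to ensure $\tfrac12\gamma f_{ek}+\tfrac12\gamma^2 \geq 2C_{ek}^2$. The remaining term $(1-F_e)g(0)$ is genuinely $O(g(0)\cdot\bbP(F_e^c))$ and hence negligible, since now the prefactor $g(0)=(\gamma/2)/(f_{ek}+\tfrac32\gamma)$ is a fixed constant rather than the unbounded $|1-\beta_{ek}|$.
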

\begin{proof}
Since $F_e\in\{0,1\}$, we have
\begin{align*}
    (1-\beta_{ek})F_e = \left(1-\frac{f_{ek}+\gamma}{\hatf_{ek}+\frac{3}{2}\gamma}\right)F_e = \frac{(\hatf_{ek}-f_{ek})F_e +\frac{1}{2}\gamma}{f_{ek}+(\hatf_{ek}-f_{ek})F_e+\frac{3}{2}\gamma} - (1-F_e)\frac{\gamma}{2f_{ek}+3\gamma}\,.
\end{align*}
The second term is in expectation bounded by
\begin{align*}
    \Ee\left[(1-F_e)\frac{\gamma}{2f_{ek}+3\gamma}\right]
    \leq 2K\exp(-\iota)\frac{\gamma}{2f_{ek}+3\gamma}\leq \frac{\gamma^2}{4(2f_{ek}+3\gamma)}\,.
\end{align*}
It remains to bound the first term. Denote $C_{ek}=2\max\{\sqrt{\frac{f_{ek}\iota}{L}},\frac{\iota}{L}\}$. The random variable $(\hatf_{ek}-f_{ek})F_e$ is bounded in $\{-C_{ek},C_{ek}\}$ due to the indicator $F_e$.
Let $\mu=\Ee[(\hatf_{ek}-f_{ek})F_e]$ and assume for now that $\mu\in[-\frac{1}{4}\gamma,\frac{1}{4}\gamma]$ which we show later. By Jensen's inequality we have
\begin{align*}
    \Ee\left[\frac{(\hatf_{ek}-f_{ek})F_e +\frac{1}{2}\gamma}{f_{ek}+(\hatf_{ek}-f_{ek})F_e+\frac{3}{2}\gamma}\right] \leq \frac{\mu+\frac{1}{2}\gamma}{f_{ek}+\mu+\frac{3}{2}\gamma}\leq \frac{\gamma}{f_{ek}}\,.
\end{align*}
On the other hand, again due to convexity, the smallest expected value is obtained for the distribution that takes values in $\{-C_{ek},C_{ek}\}$ such that it conforms with mean $\mu$. Hence

\begin{align*}
    \Ee\left[\frac{(\hatf_{ek}-f_{ek})F_e +\frac{1}{2}\gamma}{f_{ek}+(\hatf_{ek}-f_{ek})F_e+\frac{3}{2}\gamma}\right] &\geq \left(\frac{C_{ek}+\mu}{2C_{ek}}\right)\frac{C_{ek}+\frac{1}{2}\gamma}{f_{ek}+C_{ek}+\frac{3}{2}\gamma}+\left(\frac{C_{ek}-\mu}{2C_{ek}}\right)\frac{-C_{ek}+\frac{1}{2}\gamma}{f_{ek}-C_{ek}+\frac{3}{2}\gamma}\\
    &=\frac{(f_{ek}+\frac{3}{2}\gamma)(\gamma+2\mu)-(\gamma\mu+2C_{ek}^2)}{2(f_{ek}+C_{ek}+\frac{3}{2}\gamma)(f_{ek}-C_{ek}+\frac{3}{2}\gamma)}\\
    &\geq\frac{\frac{1}{2}\gamma f_{ek}+\gamma^2-2C_{ek}^2}{2(f_{ek}+\frac{3}{2}\gamma)^2-2C_{ek}^2}\geq \frac{\frac{1}{2}\gamma^2}{2(f_{ek}+\frac{3}{2})^2}\,,
\end{align*}
where the last inequality uses the fact that  $\frac{1}{2}\gamma f_{ek}+\frac{1}{2}\gamma^2\geq 2C_{ek}^2$, since $\gamma\geq \frac{16\iota}{L}$\,.

Finally we need to verify $\mu\in[-\frac{\gamma}{2},\frac{\gamma}{2}]$.
We begin by bounding the expectation of $\Ee\left[(\hatf_{ek}-f_{ek})(1-F_e)\right]$.
By construction $\hatf_{ek}-f_{ek}\in[-\frac{1}{2},\frac{1}{2}]$ and $E_{e-1}[(1-F_e)]\leq 2K\exp(-\iota)\leq \gamma/2$. Hence
due to $\Ee[\hatf_{ek}-f_{ek}]=0$, we have
$\mu = - \Ee\left[(\hatf_{ek}-f_{ek})(1-F_e)\right]\in[-\frac{\gamma}{4},\frac{\gamma}{4}]$.
\end{proof}

The last quantity we would like to bound is $\sum_{k}|\widetilde p_{tck} - p_{tck}| \cdot |1 - \beta_{ek}|$ (simultaneously for all $c \in \calC$). This is a quantity which arises when handling $\bias_{3}$. We begin by proving the following auxiliary lemma bounding the coordinate-wise change in $x$ under a multiplicative weights update.

\begin{lemma}
\label{lem: exp3 update}
Let $z\in[-\frac{1}{2},\frac{1}{2}]^K$, $x\in\Delta([K])$ and $\widetilde x\propto x\circ\exp(z)$. Then for all $k\in[K]$:
\begin{align*}
    x_k \exp(z_k - 2\ip{x,|z|})\leq \widetilde x_k \leq x_k \exp(z_k + \ip{x,|z|})\,.
\end{align*}
\end{lemma}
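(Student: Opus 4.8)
The plan is to bound the log-ratio $\log(\widetilde x_k / x_k)$ directly from the definition of the multiplicative weights update. Writing $\widetilde x_k = x_k e^{z_k} / Z$ with normalizer $Z = \sum_{j} x_j e^{z_j} = \ip{x, \exp(z)}$, the statement is equivalent to showing that $\log Z \in [-\ip{x,|z|}, 2\ip{x,|z|}]$, after which exponentiating and multiplying by $x_k e^{z_k}$ (and using $z_k \ge -|z_k| \ge z_k - $ nothing, i.e.\ just keeping $e^{z_k}$) gives the two desired inequalities. So the whole lemma reduces to a two-sided estimate on the log-partition function $\log \ip{x, \exp(z)}$ for $z$ bounded in $[-\tfrac12,\tfrac12]^K$ and $x$ a probability vector.

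For the upper bound on $\log Z$: by Jensen's inequality (concavity of $\log$), $\log \ip{x, \exp(z)} \le$ is the wrong direction, so instead I would use the elementary inequality $e^u \le 1 + u + u^2$ valid for $u \le \tfrac12$ (here $|z_j| \le \tfrac12$ so this applies), giving $Z \le 1 + \ip{x,z} + \ip{x, z^2} \le 1 + \ip{x,z} + \tfrac12\ip{x,|z|}$ since $|z_j| \le \tfrac12$ implies $z_j^2 \le \tfrac12|z_j|$. Then $\log Z \le \ip{x,z} + \tfrac12 \ip{x,|z|} \le \ip{x,|z|} + \tfrac12\ip{x,|z|}$; this is slightly worse than the claimed $2\ip{x,|z|}$ only by constants, so in fact $\tfrac32\ip{x,|z|} \le 2\ip{x,|z|}$ and we are fine. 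For the lower bound on $\log Z$: Jensen now works in our favor, $\log \ip{x,\exp(z)} \ge \ip{x, z} \ge -\ip{x,|z|}$. Combining, $\log Z \in [-\ip{x,|z|},\, 2\ip{x,|z|}]$, and therefore
\begin{align*}
    x_k e^{z_k} e^{-2\ip{x,|z|}} \le \frac{x_k e^{z_k}}{Z} = \widetilde x_k \le x_k e^{z_k} e^{\ip{x,|z|}},
\end{align*}
which is exactly the claim after writing $e^{z_k}e^{\pm\ip{x,|z|}} = e^{z_k \pm \ip{x,|z|}}$.

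There is essentially no serious obstacle here — this is a standard stability estimate for the exponential weights map. The only thing to be careful about is which elementary bound on $e^u$ to invoke and making sure the range restriction $|z_j|\le\tfrac12$ is actually used (it is needed both for $e^u \le 1+u+u^2$ and for $z_j^2 \le \tfrac12|z_j|$, and it is the reason the asymmetric constants $-2$ and $+1$ appear rather than a symmetric bound). I would double-check the constant in the upper estimate: we get $\tfrac32\ip{x,|z|}$ from the computation above, which is comfortably within the stated $2\ip{x,|z|}$, so the lemma as stated holds with room to spare. One could alternatively phrase the upper bound via $\log(1+v) \le v$ applied to $v = \ip{x, e^z - 1}$ and then $e^{z_j} - 1 \le z_j + z_j^2 \le z_j + \tfrac12|z_j|$, arriving at the same place.
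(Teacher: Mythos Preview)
Your proof is correct and follows essentially the same approach as the paper's own argument: both reduce to bounding $\log Z$ with $Z=\ip{x,\exp(z)}$, use Jensen's inequality $\log Z \ge \ip{x,z}$ for one direction, and the second-order bound $e^u \le 1+u+u^2$ (valid for $|u|\le \tfrac12$) together with $\log(1+v)\le v$ for the other. If anything, your write-up is slightly more explicit than the paper's, which stops at $-\log Z \ge -\ip{x,z}-\ip{x,z^2}$ without spelling out the final step $\ip{x,z}+\ip{x,z^2}\le \tfrac32\ip{x,|z|}\le 2\ip{x,|z|}$ that you include.
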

\begin{proof}
We have
\begin{align*}
    \widetilde x_k = x_k \exp(z_k)\exp(-\log(\sum_{k'=1}^K x_{k'}\exp(z_{k'}) ))\,.
\end{align*}
We only need to bound the last factor. Note that by Jensen's inequality, we have
\begin{align*}
    \sum_{k'=1}^K x_{k'}\exp(z_{k'})\geq \exp(\ip{x,z})\,,
\end{align*}
hence
\begin{align*}
    \exp(-\log(\sum_{k'=1}^K x_{k'}\exp(z_{k'})))\leq \exp(-\ip{x,z})
\end{align*}
In the other direction, we have by $|z|\leq \frac{1}{2}$:
\begin{align*}
    \sum_{k'=1}^K x_{k'}\exp(z_{k'}) \leq 1+\ip{x,z}+\ip{x,z^2}\,,
\end{align*}
hence
\begin{align*}
    -\log(1+\ip{x,z}+\ip{x,z^2})\geq - \ip{x,z}-\ip{x,z^2}\,.
\end{align*}

\end{proof}
\begin{lemma}
\label{lem: abs bound}
Assume $\eta\leq \frac{\gamma}{2(2L\gamma+\iota)}$. Then under event $\gev$, for all $t\in\calT_e$ and $c \in \calC$, we have that
\begin{align*}
    \sum_{k=1}^K|\tildep_{t,c,k}-p_{t,c,k}|\cdot|1-\beta_{ek}| \leq 3\sum_{k=1}^K\tildep_{t,c,k}  (1-\beta_{ek})^2\,.
\end{align*}
\end{lemma}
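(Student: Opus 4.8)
The goal is to compare two multiplicative-weights iterates that share the same ``base point'' $s_{e+1,c}$ but differ in which loss estimators they have consumed during epoch $e$: $p_{t,c}\propto s_{e+1,c}\circ\exp(-\eta\sum_{t'<t}\hatell_{t'c})$ and $\tildep_{t,c}\propto s_{e+1,c}\circ\exp(-\eta\sum_{t'<t}\tildeell_{t'c})$, where within epoch $e$ we have $\hatell_{t'ck}=\beta_{ek}\tildeell_{t'ck}$. Writing $z_k=-\eta(\beta_{ek}-1)\sum_{t'\in\calT_e,t'<t}\tildeell_{t'ck}$, we get exactly $p_{t,c}\propto \tildep_{t,c}\circ\exp(z)$, so \pref{lem: exp3 update} applies with $x=\tildep_{t,c}$ and $\widetilde x=p_{t,c}$, provided we first check the hypothesis $z\in[-\tfrac12,\tfrac12]^K$.

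\textbf{First step: bounding $z$.} Under $\gev$ we have $L_e=1$, so $\sum_{t'\in\calT_e}\tildeell_{t'ck}\le L+\iota/\gamma$, and $|\beta_{ek}-1|\le 1$ by \pref{lem: beta conc}; hence $|z_k|\le \eta(L+\iota/\gamma)$. The assumed bound $\eta\le\frac{\gamma}{2(2L\gamma+\iota)}$ gives $\eta(L+\iota/\gamma)=\eta\frac{L\gamma+\iota}{\gamma}\le\eta\frac{2L\gamma+\iota}{\gamma}\le\tfrac12$, so indeed $|z_k|\le\tfrac12$. Moreover this same computation shows $|z_k|\le\tfrac12$ with room to spare, which we will want for the linearization below.

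\textbf{Second step: turning \pref{lem: exp3 update} into the claimed bound.} From \pref{lem: exp3 update},
$\tildep_{t,c,k}\exp(z_k-2\ip{\tildep_{t,c},|z|})\le p_{t,c,k}\le \tildep_{t,c,k}\exp(z_k+\ip{\tildep_{t,c},|z|})$,
so $|p_{t,c,k}-\tildep_{t,c,k}|\le \tildep_{t,c,k}\bigl(\exp(|z_k|+2\ip{\tildep_{t,c},|z|})-1\bigr)$. Since $|z_k|\le\tfrac12$ for every $k$ we also have $\ip{\tildep_{t,c},|z|}\le\tfrac12$, so the exponent is at most $\tfrac32$ and we can use $e^a-1\le 2a$ on $[0,\tfrac32]$ (or a cleaner constant after re-tuning), giving $|p_{t,c,k}-\tildep_{t,c,k}|\le \tildep_{t,c,k}\bigl(2|z_k|+4\ip{\tildep_{t,c},|z|}\bigr)$. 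Now multiply by $|1-\beta_{ek}|$ and sum over $k$. Substituting $|z_k|=\eta|\beta_{ek}-1|\sum_{t'<t}\tildeell_{t'ck}$ and using $L_e$ to cap $\sum_{t'<t}\tildeell_{t'ck}\le L+\iota/\gamma$, the first piece contributes $\sum_k \tildep_{t,c,k}(1-\beta_{ek})^2\cdot 2\eta(L+\iota/\gamma)$ and the cross term contributes $\sum_k\tildep_{t,c,k}|1-\beta_{ek}|\cdot 4\ip{\tildep_{t,c},|z|}$; bounding $\ip{\tildep_{t,c},|z|}\le \eta(L+\iota/\gamma)\sum_k\tildep_{t,c,k}|1-\beta_{ek}|$ and then applying Cauchy--Schwarz (or $|1-\beta_{ek}|\le 1$) lets us fold everything into a multiple of $\sum_k\tildep_{t,c,k}(1-\beta_{ek})^2$, where the overall constant is $O(\eta(L+\iota/\gamma))\le 3$ by the assumed bound on $\eta$.

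\textbf{Main obstacle.} The genuinely delicate part is the bookkeeping in the last step: both $|z_k|$ and $\ip{\tildep_{t,c},|z|}$ carry a factor $|1-\beta_{ek}|$, and to land on the clean right-hand side $\sum_k\tildep_{t,c,k}(1-\beta_{ek})^2$ one has to be careful that the cross term, which a priori looks like it only has one factor of $|1-\beta_{ek}|$, actually gets a second such factor from the $|z|$ inside the inner product (and that the remaining numerical factor is absorbed by $\eta(L+\iota/\gamma)\le\tfrac12$, leaving total constant $\le 3$). Getting the constant exactly $3$ rather than something larger requires being slightly sharp in the inequality $e^a-1\le\dots$ and in how one distributes the Cauchy--Schwarz; everything else is mechanical given \pref{lem: exp3 update}, \pref{lem: beta conc}, and the event $\gev$.
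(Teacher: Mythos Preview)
Your approach is essentially the paper's: write $p_{t,c}\propto\tildep_{t,c}\circ\exp(z)$ with $z_k=\eta(1-\beta_{ek})\sum_{t'\in\calT_e,t'<t}\tildeell_{t'ck}$, check $|z_k|\le\tfrac12$ via $L_e$ and \pref{lem: beta conc}, invoke \pref{lem: exp3 update}, linearize the exponential, and finish with Jensen on $(\sum_k\tildep_k|1-\beta_{ek}|)^2\le\sum_k\tildep_k(1-\beta_{ek})^2$.

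The one concrete slip is the inequality $e^a-1\le 2a$ on $[0,\tfrac32]$: at $a=\tfrac32$ one has $e^{3/2}-1\approx 3.48>3=2a$, so this fails. If you patch it with the honest constant $C=(e^{3/2}-1)/(3/2)\approx 2.32$, your bookkeeping yields a final coefficient $3C\,\eta(L+\iota/\gamma)\le\tfrac{3C}{2}\approx 3.5$, not $3$. The paper avoids this by \emph{not} collapsing the two sides of \pref{lem: exp3 update} into a single $e^{|z_k|+2\ip{\tildep,|z|}}-1$. Instead it treats the upper bound $e^{z_k+\ip{\tildep,|z|}}-1$ (exponent in $[-\tfrac12,1]$, where $|e^a-1|\le(e-1)|a|$) and the lower bound $1-e^{z_k-2\ip{\tildep,|z|}}\le -z_k+2\ip{\tildep,|z|}$ separately, and then observes that both are dominated by $e(|z_k|+\ip{\tildep,|z|})$ (using $e>2$). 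This gives
\[
|p_{t,c,k}-\tildep_{t,c,k}|\le e\,\tildep_{t,c,k}\bigl(|z_k|+\ip{\tildep_{t,c},|z|}\bigr)
\]
with only \emph{one} copy of $\ip{\tildep,|z|}$, so after summing and Jensen the coefficient is $e\cdot 2\cdot\eta(L+\iota/\gamma)\le e<3$. Your hedging (``cleaner constant after re-tuning'') is well placed; the fix is exactly this sharper case split rather than a different argument.
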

\begin{proof}
By definition, $p_{tck}\propto \tildep_{tck}\exp(-\eta\sum_{t'\in T_e,t'<t}(\hatell_{tck}-\tildeell_{tck}))$.
We can simplify the argument of the exponential via
\begin{align*}
    -\sum_{t'\in T_e,t'<t}(\hatell_{t'ck}-\tildeell_{t'ck}))=(1-\beta_{ek})\sum_{t'\in T_e,t'<t}\tildeell_{t'ck}.
\end{align*}
By \pref{lem: beta conc} and the definition of $L_e$, we have
\begin{align*}
    \eta\left|\sum_{t'\in T_e,t'<t}(\hatell_{t'ck}-\tildeell_{t'ck})\right|=\eta(1-\beta_{ek})\sum_{t'\in T_e,t'<t}\tildeell_{t'ck} \leq \eta(L+\frac{\iota}{2\gamma}) \leq \frac{1}{2}\,.
\end{align*}
\pref{lem: exp3 update} now implies that
\begin{align*}
    |p_{tck}-\tildep_{tck}| &\leq e\tildep_{tck}\eta\left(|1-\beta_{ek}|+\sum_{i=1}^K\tildep_{tci}|1-\beta_{ik}|\right)\left(L+\frac{2\iota}{\gamma}\right)\\
    &\leq \frac{e}{2}\tildep_{tck}\left(|1-\beta_{ek}|+\sum_{i=1}^K\tildep_{tci}|1-\beta_{ik}|\right)\,.
\end{align*}
Hence
\begin{align*}
    \sum_{k=1}^K|\tildep_{t,c,k}-p_{t,c,k}|\cdot|1-\beta_{ek}| &\leq \frac{e}{2}\left(\sum_{k=1}^K\tildep_{tck}(1-\beta_{ek})^2+\left(\sum_{k=1}^K\tildep_{tck}(1-\beta_{ek}\right)^2\right)\\
    &\leq e\sum_{k=1}^K\tildep_{tck}(1-\beta_{ek})^2.\tag{Jensen's inequality}
\end{align*}
\end{proof}
\subsection{Combining the pieces}
\begin{proof}
We begin by decomposing the regret similarly to the high level overview (but explicitly showing the dependence on the indicator $G$).
\begin{align*}
    \Reg(u) &=\E\left[\sum_{t=1}^T\ip{q_{t,c_t}-u_{c_t},\ell_{t,c_t}}\right]\\
     &\leq\E\left[\sum_{e=2}^{T/L}\sum_{t\in\calT_e}\ip{p_{t,c_t}-u_{c_t},\ell_{t,c_t}}\gev\right]+\E\left[1-\gev\right]KT+L\\
     &\leq
     \underbrace{\E\left[\sum_{e=2}^{T/L}\sum_{t\in\calT_e}\ip{p_{t,c_t}- u_{c_t},\ell_{t,c_t}-\tildeell_{t,c_t}}\gev\right]}_{\bias_1}+\underbrace{\E\left[\sum_{e=2}^{T/L}\sum_{t\in\calT_e}\ip{\tildep_{t,c_t}-u_{c_t},\tildeell_{t,c_t}-\hatell_{t,c_t}}\gev\right]}_{\bias_2}\\
     &\qquad+\underbrace{\E\left[\sum_{e=2}^{T/L}\sum_{t\in\calT_e}\ip{p_{t,c_t}-\tildep_{t,c_t},\tildeell_{t,c_t}-\hatell_{t,c_t}}\gev\right]}_{\bias_3}+\underbrace{\E\left[\sum_{e=2}^{T/L}\sum_{t\in\calT_e}\ip{p_{t,c_t}-u_{c_t},\hatell_{t,c_t}}\gev\right]}_{\ftrl}\\
     &\qquad+ \Pr[G=0]KT+L.
\end{align*}
Note that the first inequality follows in part from \pref{lem: x ratio}, since when $G = 1$, $p_{t, c_t} = q_{t, c_t}$. We now bound the remaining four terms individually. We start with $\bias_1$. Since $\tildeell_t$ is conditionally independent of $p_t$, by the tower rule of expectation
\begin{align*}
    \bias_1&=\E\left[\sum_{e=2}^{T/L}\sum_{t\in\calT_e}\ip{p_{t,c_t}- u_{c_t},\ell_{t,c_t}-\Ee[\tildeell_{t,c_t}]}\gev\right]\\
    &=\E\left[\sum_{e=2}^{T/L}\sum_{t\in\calT_e}\sum_{k=1}^K(p_{t,c_t,k}- u_{c_t,k})\frac{\gamma \ell_{t,c_t,k}}{f_{ek}+\gamma}\gev\right]\\
    &\leq\E\left[\sum_{e=2}^{T/L}\sum_{t\in\calT_e}\sum_{c=1}^C\sum_{k=1}^K \nu_c\frac{p_{t,c,k}\gamma}{f_{e,k}+\gamma}\gev\right]\tag{$0\leq\ell_t\leq 1$}\\
    &\leq 4K\gamma T\tag{\pref{lem: x ratio}}\,.
\end{align*}
Similarly, in $\bias_2$, $\tildeell_t$ and $\hatell_t$ are independent of $\tildep_t$ conditioned on episode $e$. We thus have
\begin{align*}
    \bias_2 &= \E\left[\sum_{e=2}^{T/L}\sum_{t\in \calT_e}\ip{\tildep_{t,c_t}-u_{c_t},\Ee[\tildeell_{t,c_t}-\hatell_{t,c_t}]}\gev\right]\\
     &=\E\left[\sum_{e=2}^{T/L}\sum_{t\in \calT_e}\sum_{k=1}^K(\tildep_{t,c_t,k}-u_{c_t,k})\left(\Ee[(1-\beta_{ek})F_e]\Ee[\tildeell_{t,c_t,k}]\right)\gev\right]\\
    &\leq\E\left[\sum_{e=2}^{T/L}\sum_{t\in\calT_e}\sum_{c=1}^C\sum_{k=1}^K \nu_c\frac{\tildep_{t,c,k}\gamma}{f_{e,k}}\gev\right]\tag{\pref{lem: exp fe} and  $\bm{0}\leq\Ee[\tildeell_t]\leq \bm{1}$}\\
    &\leq\E\left[\sum_{e=2}^{T/L}\sum_{t\in\calT_e}\sum_{k=1}^K \frac{2\sum_{c=1}^C\nu_c s_{e,c,k}\gamma}{f_{e,k}}\gev\right]\tag{\pref{lem: x ratio}}\\
    &\leq 4K\gamma T\,,
\end{align*}
Finally, the last term needs to be bounded differently, because $p_t$ is not independent of $\beta_e$. Instead, we will expand out the inner product and directly bound the maximum value of $\sum_{k}(p_{tck} - \tildep_{tck})(1-\beta_{ek})$ over all $c$ via \pref{lem: abs bound}. 
\begin{align*}
    \bias_3 &= \E\left[\sum_{e=2}^{T/L}\sum_{t\in \calT_e}\ip{p_{t,c_t}-\tildep_{t,c_t},\Ee[\tildeell_{t,c_t}-\hatell_{t,c}]}G\right]\\
    &= \E\left[\sum_{e=2}^{T/L}\sum_{t\in \calT_e}\sum_{k=1}^K(p_{t,c_t,k}-\tildep_{t,c_t,k})(1-\beta_{ek})\Ee[\tildeell_{t,c_t,k}]G\right]\\
    &\leq \E\left[\sum_{e=2}^{T/L}\sum_{t\in \calT_e}\sum_{k=1}^K|p_{t,c_t,k}-\tildep_{t,c_t,k}|\cdot |1-\beta_{ek}|\gev\right]\\
    &\leq \E\left[\sum_{e=2}^{T/L}\sum_{t\in \calT_e}\sum_{c=1}^C\sum_{k=1}^K3\nu_c\tildep_{t,c,k}(1-\beta_{ek})^2\gev\right]\tag{\pref{lem: abs bound}}\\
    &= \frac{98KT\iota}{L} + \frac{\gamma^2 LKT}{\iota}\tag{\pref{lem: beta conc} and \pref{lem: x ratio}}
\end{align*}
Finally the $\ftrl$ term is bounded according to \pref{lem: ftrl regret}
\begin{align*}
    \ftrl&=\mathbb{E}\left[\sum_{e=2}^{T/L}\sum_{t\in\calT_e}\ip{p_{t,c_t}-u_{c_t},\hatell_{t,c_t}}\right]\\
    &\leq \frac{\log(K)}{\eta}+\eta\E\left[\sum_{c\in[C]}\nu_c\sum_{e=2}^{T/L}\sum_{t\in\calT_e}\sum_{k\in[K]}p_{tck}\hatell_{tck}^2 \gev\right]\tag{\pref{lem: ftrl regret}}\\
    &\leq \frac{\log(K)}{\eta}+\eta\E\left[\sum_{e=2}^{T/L}\sum_{t\in T_e'}\sum_{k\in[K]}\sum_{c\in[C]}\nu_cp_{tck}\frac{f_{ek}}{(\hatf_{ek}+\frac{3}{2}\gamma)^2}\gev\right]\\
    &\leq \frac{\log(K)}{\eta}+4\eta\E\left[\sum_{e=2}^{T/L}\sum_{t\in \mathcal{T}_e}\sum_{k\in[K]}\beta_{ek}^2\gev\right]\tag{\pref{lem: x ratio}}\\
    &\leq \frac{\log(K)}{\eta}+16\eta KT\tag{\pref{lem: beta conc}}\,.
\end{align*}
Combining everything, we have that

\begin{align*}
    \Reg(u) = O\left(\left(\gamma+\frac{\iota}{L}+\frac{\gamma^2 L}{\iota}+\eta\right) KT + \frac{\log(K)}{\eta}+L\right).
\end{align*}

\end{proof}

\end{document}